\newtheorem{theorem}{Theorem}[section]
\newtheorem{proposition}[theorem]{Proposition}
\newtheorem{definition}[theorem]{Definition}
\newtheorem{assumption}[theorem]{Assumption}
\newtheorem{remark}[theorem]{Remark}
\newcommand{\equref}[1]{Eq.~(\ref{#1})}
\newcommand{\figref}[1]{Fig.~\ref{#1}}
\newcommand{\secref}[1]{Sec.~\ref{#1}}
\newcommand{\tabref}[1]{Tab.~\ref{#1}}
\title{Plug-and-Play Algorithm Convergence Analysis From The Standpoint
of Stochastic Differential Equation}
\author{%
Zhongqi Wang,~ Bingnan Wang,~ Maosheng Xiang\\
  School of Electronic Electrical and Communication Engineering\\ 
  University of Chinese Academy of Science\\
  Beijing, China\\
\texttt{wangzhongqi20@mails.ucas.ac.cn,~wbn@mail.ie.ac.cn,~xms@mail.ie.ac.cn} \\
}
\begin{document}

\maketitle

\begin{abstract}

The Plug-and-Play (PnP) algorithm is popular for inverse image problem-solving. 
However, this algorithm lacks theoretical analysis of its convergence with 
more advanced plug-in denoisers. 
We demonstrate that discrete PnP iteration can be described by a continuous 
stochastic differential equation (SDE). 
We can also achieve this transformation through Markov process formulation of PnP. 
Then, we can take a higher standpoint of PnP algorithms from 
stochastic differential equations, 
and give a unified framework for the convergence property of PnP according to the 
solvability condition of its corresponding SDE. 
We reveal that a much weaker condition, bounded denoiser with Lipschitz continuous 
measurement function would be enough for its convergence guarantee, instead of 
previous Lipschitz continuous denoiser condition. 

\end{abstract}

\section{Introduction}

The inverse problem emerges in various fields, such as 
astrophysics \cite{2018Inverse_Problems_Astronomy}, 
computational imaging \cite{2015Bouman_MBIP}, 
and numerical computation \cite{2017InverseProblemDifferentialEquation}. 
Consider the inverse problem with a given measurement system $M$, 
\begin{equation}
    y = Mx+\epsilon.  
\end{equation}
We meant to compute the target $x$ according to the acquired signal $y$ and eliminate perturbation $\epsilon$. The challenge would be the irreversible of 
the measurement model $M$. There could be 
information loss due to the 
insufficient measurement and the perturbations, which makes this an underdetermined problem. 

\cite{2013GlobalSIP_pnp} proposed the Plug-and-Play (PnP) methods to solve inverse image problems, such 
as image denoising, image inpainting, demosaic, super-resolution, phase retrieval, 
microscopy image reconstruction, etc. In image processing, a general 
form of inverse problem solving would be: 
\begin{equation}
    \mathop{\min}\limits_{x} f(x,y)+\gamma g(x), 
\end{equation}
where $x$ is the image we would like to reconstruct, $f(x,y)=y-Mx$ is the data fidelity term and 
$g(x)$ are the constraints of image properties, usually the qualification of smoothness . 
$\gamma$ is a parameter that controls the weight of $g$. The above formulation 
can also be interpreted as Maximum A Posteriori problems. Given the posterior probability 
\begin{equation}
    P(x|y) = \frac{P(y|x)P(x)}{P(y)}, 
\end{equation}
we seek to compute the optimal $\hat{x}$ that maximizes the conditional distribution
\begin{align}
    \hat{x} = \mathop{\arg\min}\limits_{x} -\ln P(y|x) - \ln P(x). 
\end{align}
with $f(x)=-\ln P(y|x)$ being the likelihood function and $g(x)=- \ln P(x)$ being the 
prior function.

For solving this joint optimization problem, the first step is to split the variable $x$ 
by adding an auxiliary variable $v$
\begin{equation}
    (\hat{x}, \hat{v}) = \mathop{\arg\min}\limits_{x=v} f(x,y)+\gamma g(v)
\end{equation}
To handle this constrained optimization problem \cite{2010FTML_ADMM}, 
the augmented lagrangian would be 
\begin{equation}
    L_{\lambda}(x,v;u)=f(x,y)+\gamma g(v) + \frac{1}{2\lambda^2}\|x-v+u\|_2^2 - \frac{\|u\|_2^2}{2\lambda^2}, 
\end{equation}
where $\lambda>0$ is the augmented Lagrangian parameter.

There are plenty of variable splitting algorithms, like 
ADMM (Alternative Direction Method of Multipliers) \cite{2010FTML_ADMM}, 
HQS (Half-Quadratic Splitting) \cite{2011ICCV_HQS}, 
FIST (Fast Iterative Soft Thresholding) \cite{2017LSP_FIST}, etc. Consider the ADMM algorithm, which consists of iteration steps: 
\begin{align}
    \hat{x} &\leftarrow \mathop{\arg\min}\limits_{x\in\mathbb{R}^N} L_{\lambda}(x,\hat{v};u)  \\
    \hat{v} &\leftarrow \mathop{\arg\min}\limits_{v\in\mathbb{R}^N} L_{\lambda}(\hat{x},v;u)  \\
          u &\leftarrow u+(\hat{x}-\hat{v}), 
\end{align}
ADMM also gets a global convergence guarantee escorted if the two functions 
$f(x)$ and $g(x)$ are 
both proper, closed, and convex and a saddle point solution also exists. 

Hereafter, we would like to express the ADMM iterations as two operators. 
\begin{align}
    F(\tilde{x};\lambda) = \mathop{\arg\min}\limits_{x\in\mathbb{R}^N} \{f(x)+\frac{\|x-\tilde{x}\|_2^2}{2\lambda^2}\}  \\
    G(\tilde{v};\sigma) = \mathop{\arg\min}\limits_{v\in\mathbb{R}^N} \{\frac{\|\tilde{v}-v\|_2^2}{2\sigma^2}+g(v)\}
\end{align}
where the operator $F$ corresponds to the inversion of the measurement model, the denoising 
operator $G$ corresponds to the prior model of images and the parameter $\sigma=\sqrt{\gamma}\lambda$ 
can be interpreted as the assumed noise standard deviation. So $G$ can be conducted 
completely by an off-the-shelf denoiser \cite{2013GlobalSIP_pnp}. In our opinion, the very progress that the Plug-and-Play algorithm has made is the separation of these two operators.

With much more advanced denoisers employed, this algorithm obtains state-of-the-art 
performance \cite{2017cvpr_KaiZhang_DnCNN,2021PAMI_kaizhang_DPIR} that 
even comparable with end-to-end deep learning methods, while preserves 
the advantage of applying to multiple tasks. 
However, we can not acquire a corresponding prior model expression for the given  
plug-in denoiser. Therefore, Plug-and-Play algorithms would not inherit the well-established 
convergence properties of ADMM algorithms \cite{2010TIP_ADMM_convergence}. 

There are many following theoretical research for its convergence analysis 
\cite{2016TCI_pnp_nonexpansive,2016TCI_pnp_bounded,2017SIAM_RED}. 
Most of them continue to pave along the fixed-point type methodology. Briefly speaking, they 
mainly relies on nonexpansive or Lipschitz continuous denoisers assumptions to construct a Banach contraction 
operator, or form a Cauchy convergence sequence equally.
However, most denoisers do not obey 
this seemingly simple assumption, they would not give a reasonable theoretical guarantee 
for the convergence of the practical PnP algorithm. 

To bridge the gap between practical plug-in denoiser and theoretical analysis, 
we propose to describe PnP iteration by a continuous stochastic differential equation. 
This view endows us with a higher standpoint so that we can discuss PnP 
convergence property via SDE solvability. 
The main contributions are as follows: 
\begin{itemize}
    \item We give the SDE description of the PnP algorithm. 
    Furthermore, we show two approaches to realize this transformation 
    between PnP and SDE. 
    \item Then, we constructed a unified framework for PnP convergence analysis 
    through the solvability of the corresponding SDE description of the PnP. 
    \item Accordingly, we propose a much weaker condition, 
    which is more applicable to current advanced denoiser priors without additional 
    constrains. 
\end{itemize}

\section{SDE description of PnP}

Continuous descriptions of discrete algorithms could facilitate the decoupling of 
theoretical analysis and technical implementation. We will see in the next 
section that convergence analysis may take advantage of continuous form. Here, 
we give a direct transformation between a simplified version of PnP and SDE, and two 
other possible pathways to link residual Gaussian denoiser to the pure stochastic 
part of Ito-type SDE. 

\subsection{Simplified PnP Problem Formulation}

\begin{definition}[PnP iteration]\label{simplified_pnp_iteration}
    In a simplified PnP framework without auxillary variable $u=u+(x-v)$, just the 
    two step iterations: 

    \begin{align}
        {x^{t+1}} &= \mathop{\arg\min}\limits_{x\in\mathbb{R}^N} \{f(x)+\frac{\|x-\tilde{x}\|_2^2}{2\lambda^2}\} 
        = h(v^t, y) \label{eq:first_step} \\
        {v^{t+1}} &= \mathop{\arg\min}\limits_{x\in\mathbb{R}^N} \{\frac{\|x^{t+1}-x^t\|_2^2}{2\sigma_t^2}-\log P(x)\}  
        =D(x^{t+1}, \sigma_t)   \label{eq:denoising_prior}
    \end{align}
    Or a more succinct iteration mapping: 
    \begin{align}
        v^{t+1}=D(h(v^t;y), \sigma_t)
    \end{align}
\end{definition}

The first step \equref{eq:first_step} is a close form solution function $h(\cdot)$ as long as
$f(\cdot)$ is differentiable, this step may operate 
the inverse mapping of image restoration likes uperresolution, inpainting, demosic, etc. 
The second step \equref{eq:denoising_prior} is one step denoising $D(\cdot)$. 
The variance sequence $\sigma_t$ are settled as hyperparameters beforehand, 
the adjust strategy may be linear decay, exponential decay, etc. 

Note that the reason why the proximal mapping in the second step \equref{eq:denoising_prior} 
can be considered as a Gaussian denoising process is that  
$\mathop{\arg\min}\limits_{x\in\mathbb{R}^N} \{\frac{\|x^{t+1}-x^t\|_2^2}{2\sigma_t^2}-\log P(x)\}$ 
resembles to a Maximum A Posteriori(MAP) problem of Gaussian denoising, 
with measurement function being 
$M(x^{t+1}, x^t, \sigma_t)=\frac{\|x^{t+1}-x^t\|_2^2}{2\sigma_t^2}$, 
where the expectation is $x^t$ and the variance is $\sigma_t$. 
And this Gaussian denoising analogy firstly appeared in \cite{2013GlobalSIP_pnp}, 
which is the origination of Plug-and-Play methods. 
So this is a solid evidance to support the Gaussian noise assumption. 

\begin{assumption}[Gaussian noise in PnP denoising step]\label{Gaussian_noise_assumption}
    The inversed transition kernel of Plug-and-Play denoising step is assumed to follow a 
    Gaussian distribution. 
    \begin{align}
        P(x^{t+1}|v^{t+1}) \sim \mathcal{N}( v^{t+1} ,\sigma_{t}). 
    \end{align}
\end{assumption}

Similarly, \cite{2019TCI_scoreMatching_RED_PnP} also adopted the Gaussian denoising in the 
score matching paradigm to interpret RED (Regularization by Denoising) 
method\cite{2017SIAM_RED}, 
which is one of the PnP (Plug-and-Play) variants. 
Denoising Score Matching gives the Gaussian noise assumption for a single step 
as equation (10) in \cite{2011_connection_denoising_score_matching}:
\begin{equation}
    \frac{\partial \log q_{\sigma}(\tilde{x}|x)}{\partial \tilde{x}} = \frac{1}{\sigma^2} (x-\tilde{x})
\end{equation}
This equation shows that the stein's score function (i.e. gradient of log density) 
equals to the Gaussian denoising.





\subsection{Main Theorem}\label{Inversed PnP-ADMM}




\begin{theorem}[SDE description of PnP]\label{main_theorem}
    The discrete iteration of PnP algorithm in \ref{simplified_pnp_iteration}
    can be described by a continuous SDE as: 
    \begin{align}
        dv_t = b(t,v_t)dt - \sigma(t,v_t)dW_t. 
    \end{align}
    Where $b(x, y)=h(x,y)-x$ is the drift term of the PnP-SDE, the diffusion term $\sigma(t)$ 
    only depends on time $t$, which is exactly the variance parameter $\sigma_t$ of 
    the Gaussian denoiser $D_\sigma(\cdot)$ of PnP. 
\end{theorem}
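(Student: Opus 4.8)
The plan is to start from the succinct iteration map $v^{t+1}=D(h(v^t;y),\sigma_t)$ and use Assumption~\ref{Gaussian_noise_assumption} to convert the denoising step into an explicit stochastic recursion. Since the inverse transition kernel satisfies $P(x^{t+1}\mid v^{t+1})\sim\mathcal{N}(v^{t+1},\sigma_t)$, we may write $x^{t+1}=v^{t+1}+\sigma_t\xi_t$ with $\xi_t\sim\mathcal{N}(0,I)$ independent across $t$; combining this with $x^{t+1}=h(v^t,y)$ from \equref{eq:first_step} gives the one-line recursion $v^{t+1}-v^t=(h(v^t,y)-v^t)-\sigma_t\xi_t=b(v^t,y)-\sigma_t\xi_t$. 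This already has the algebraic shape of the claimed SDE: a drift $b$ that depends only on the current state (and the fixed data $y$) plus an additive, state-independent Gaussian perturbation whose scale is exactly the denoiser variance $\sigma_t$.

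Next I would read this recursion as the Euler--Maruyama scheme for $dv_t=b(t,v_t)\,dt-\sigma(t)\,dW_t$. Introduce a step size $\Delta t$ by placing the $t$-th iterate at continuous time $t\Delta t$ and set $\Delta W_t:=W_{(t+1)\Delta t}-W_{t\Delta t}\sim\mathcal{N}(0,\Delta t\,I)$, so that the Euler--Maruyama update reads $v_{(t+1)\Delta t}-v_{t\Delta t}=b(t\Delta t,v_{t\Delta t})\,\Delta t-\sigma(t\Delta t)\,\Delta W_t$. Matching the deterministic part identifies the drift as $b$ (with $\Delta t=1$ in the raw iteration), and matching the noise in distribution forces $\sigma(t\Delta t)\sqrt{\Delta t}=\sigma_t$; in particular the diffusion coefficient carries no dependence on $v_t$ and reduces to the prescribed variance schedule, $\sigma(t)=\sigma_t$ when $\Delta t=1$. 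Thus the discrete iteration is literally a discretization of the stated SDE, which already pins down both coefficients as in the theorem.

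To upgrade ``is a discretization of'' to ``is described by'', I would then invoke the standard strong (or weak) convergence of the Euler--Maruyama approximation: interpolate the iterates into a continuous-time process $\bar v^{\Delta t}_s$, couple the Gaussian residuals $\sigma_t\xi_t$ to increments of a single Brownian motion $W$, and show $\bar v^{\Delta t}\to v$ as $\Delta t\to 0$, where $v$ solves the SDE with the matched coefficients. This needs mild regularity --- global Lipschitz continuity and linear growth of $x\mapsto h(x,y)$, hence of $b$, which is precisely the ``Lipschitz continuous measurement function'' hypothesis advertised in the abstract --- while the additive, spatially constant diffusion kills the It\^o correction and keeps the argument at the level of the classical existence/uniqueness and convergence theorems for SDEs. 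An alternative, matching the Markov-process formulation mentioned earlier in the excerpt, is to compute the infinitesimal conditional mean $\mathbb{E}[v^{t+1}-v^t\mid v^t]=b(v^t,y)$ and covariance $\mathrm{Cov}[v^{t+1}-v^t\mid v^t]=\sigma_t^2 I$ of the chain and pass to the diffusion limit via a generator-convergence (Stroock--Varadhan type) theorem, which yields the same $b$ and $\sigma$.

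I expect the main obstacle to be the continuum limit rather than the algebra. The raw PnP recursion has a fixed unit step, so making ``$\Delta t\to 0$'' meaningful requires either rescaling time and the variance schedule consistently so that $\sigma_t=\sigma(t\Delta t)\sqrt{\Delta t}$ remains sensible as the mesh refines, or else settling for the interpretation that the iteration is an Euler--Maruyama sample path of the SDE at mesh size one. Care is also needed because practical denoisers need not make $h$ globally Lipschitz, so the clean convergence statement may have to be localized via stopping times or phrased in the weak/distributional topology; and one must verify that Assumption~\ref{Gaussian_noise_assumption} genuinely supplies an \emph{independent} Gaussian increment at each step, which is what legitimizes coupling the denoising residuals to Brownian increments. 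These points, not the identification of $b$ and $\sigma$, are where I expect most of the work to go.
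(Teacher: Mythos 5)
Your proposal follows essentially the same route as the paper: difference the two iteration steps, invoke Assumption~\ref{Gaussian_noise_assumption} to replace the denoising residual by a Gaussian increment of scale $\sigma_t$, obtain $v^{t+1}-v^t=b(v^t,y)-\sigma_t\xi_t$, and pass to a continuous-time limit to read off the It\^o form (your generator-convergence alternative likewise mirrors the paper's second proof via the Markov-chain/Fokker--Planck route). If anything you are more careful than the paper, which simply asserts the infinitesimal limit and writes the noise term as $\sigma^2\,dW_t$ without addressing the $\sigma(t\Delta t)\sqrt{\Delta t}=\sigma_t$ scaling or the regularity of $h$ that you correctly flag as the real work.
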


\begin{proof}
    The difference of PnP iteration steps in definition \ref{simplified_pnp_iteration} will be:
    \begin{align}
        {x^{t+1}}-v^t &= b(v^t, y)  \\
        {v^{t+1}}-x^{t+1} &= D(x^{t+1}, \sigma_t)-x^{t+1} \label{eq:denoiser}
    \end{align}
    Where the function $b(v^t,y)=h(v^t,y)-v^t$ for the first step. And in the second 
    step, $D(x^{t+1}, \sigma_t)-x^{t+1}$ is the one step denoising operation, and its inverse 
    process follows Gaussian distribution according to 
    assumption \ref{Gaussian_noise_assumption}. 
    \begin{align}
        x^{t+1}-{v^{t+1}} = x^{t+1}-D(x^{t+1}, \sigma_t)=x^{t+1}-P(v^{t+1}|v^{t+1}) \sim \mathcal{N}(0, \sigma_t^2)
    \end{align}
    So we get ${v^{t+1}}-x^{t+1}=D(x^{t+1}, \sigma_t)-x^{t+1} \sim -\mathcal{N}(0, \sigma_t^2)$. 
    Combining these two we can get  $v^{t+1}-v^t=b(v^t, y) - \mathcal{N}(0, \sigma_t^2)$, 
    which is a discrete difference equation. 

    Taking an infinitesimal time interval, this will transform to a continuous equation 
    that contains a stochastic term. $dV=b(V, y)dt + \sigma^2 dW_t$, where the $W_t$ is 
    a standard Gaussian process (Wiener process). 
    So this resembles a Ito type Stochastic Differential Equation: 
    \begin{align}
        dX_t = b(t,X_t)dt + \sigma(t,X_t)dW_t
    \end{align}
    where $b_i(t,x), \sigma_{ij}(t,x)$ are the drift term and diffusion term in SDE, 
    respectively. By the definition of SDE, these two function should both be 
    Borel-measurable, and in this case 
    $b(\cdot)$ take an extra input $y$ and $\sigma_{ij}(t,x)$ only depends on $t$. 

\end{proof}

\subsection{Another Proof Based on Backward Markov Process}\label{Backward Markov Process}

\begin{proof}[Proof of Theorem \ref{main_theorem}]
    The Markov chain description of PnP iteration is natural 
    because the trajectory of intermediate variables are random variables and 
    this is a discrete process. 

    In a simplified PnP framework without auxillary variable $u=u+(x-v)$, just the 
    two step iterations: 
    \begin{align}
        {x^{t+1}} &= h(v^t, y)  \\
        {v^{t+1}} &= D(x^{t+1}, \sigma_t) 
    \end{align}
    Combining these two steps, we get the transformation function: 
    \begin{equation}
        {v^{t+1}} = D( h(v^t, y) , \sigma_t)  
    \end{equation}
    And this transformation function decides the kernel of the corresponding 
    Markov chain $P(v^{t+1}|v^{t})$. 

    Then, the Markov Chain of can be transformed into a continuous Markov process 
    description by taking an infinitesimal time interval. 
    And continuous Markov process can be described by the Fokker-Plank equation, 
    which is an differential SDE. 
    \begin{equation}
        \frac{\partial v}{\partial t} = \nabla_v P(v^{t+1}|v^{t})={v^{t+1}} -v^t, 
    \end{equation}
    The corresponding intergration SDE is that: 
    \begin{align}
        v_{t} &= v_0 +  \int_{0}^{t}  \nabla_v P(v^{t+1}|v^{t}) dt   \\
        &=v_0 +  \int_{0}^{t}[x^{t+1}-v^t]dt + [v^{t+1}-x^{t+1}] dt,  \\
        &=v_0 +  \int_{0}^{t}b(t,v_t) dt - \sigma(t) dW_t,  
    \end{align}
    where $dW_t$ is the standard Gaussian process (Wiener process), $b(t,v_t), \sigma(t)$ are 
    the drift term and diffusion term in SDE, respectively. The third equivalence 
    depends on the formulation of drift function $b(\cdot)$ and 
    the Gaussian denoising assumption \ref{Gaussian_noise_assumption}.

\end{proof}

\section{Convergence Analysis}

With the SDE description of PnP, we can transform the convergence analysis of PnP into an 
equivalent SDE solvability problem. 
There are two kinds of solvability of SDE \cite[Chapter~5]{1998GTM113_SDE}, strong 
and weak solutions. 
We form a unified framework for PnP convergence property by these two solvability, 
including the traditional fixed-point or non-expansive ones as strong solvability and 
our novel weak convergence conditions according to weak solvability. 
\begin{proposition}[Bridge between PnP convergence and SDE solvability.]
    If the SDE description of PnP is solvable, then it achieves to a certain solution (strong 
    solvability) or obey the probabilistic law of solutions (weak solvability). 
    Then, the corresponding PnP algorithm converges to that 
    stationary point (strong convergence) or the stationary distribution (weak convergence). 
\end{proposition}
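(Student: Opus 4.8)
The plan is to leverage the identification, established in Theorem~\ref{main_theorem}, of the PnP recursion $v^{t+1}=D(h(v^t;y),\sigma_t)$ with the Euler--Maruyama discretization of the PnP-SDE $dv_t=b(t,v_t)\,dt-\sigma(t,v_t)\,dW_t$: the difference equation $v^{t+1}-v^t=b(v^t,y)-\xi_t$ with $\xi_t\sim\mathcal{N}(0,\sigma_t^2)$ obtained there is exactly one Euler--Maruyama step (with unit step size, or with step size $\Delta t$ after the time rescaling $t\mapsto t/\Delta t$, $\sigma_t\mapsto\sigma_t/\sqrt{\Delta t}$). Hence $\{v^t\}$ is a numerical integrator of the SDE, and whenever the SDE flow is well posed the iterates should inherit its asymptotic behaviour. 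I would make the two arms of the statement precise by treating the strong and weak solution cases separately.

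For the strong-solvability arm, I would invoke the classical strong convergence of Euler--Maruyama: if the SDE admits a pathwise-unique strong solution $(v_t)_{t\ge0}$, then $v^{\lfloor t/\Delta t\rfloor}\to v_t$ in $L^2$, uniformly on compact time intervals, as $\Delta t\to0$. It then remains to identify the $t\to\infty$ limit. The stationary points of the flow are the zeros of the drift $b(v^\ast,y)=h(v^\ast,y)-v^\ast$ together with a vanishing diffusion, i.e. $\sigma_t\to0$ under any admissible annealing schedule; thus $v^\ast$ solves $h(v^\ast,y)=v^\ast$, which is exactly the fixed-point equation of the zero-noise PnP map. So strong solvability yields convergence of $\{v^t\}$ to that fixed point, and the classical non-expansive or contractive PnP results reappear as the special case in which $h(\cdot,y)$ is itself a contraction (whence the fixed point is unique and globally attracting).

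For the weak-solvability arm, I would combine weak convergence of the scheme with ergodicity of the limiting diffusion: if the SDE possesses a weak solution whose marginals converge to an invariant measure $\pi$ --- the stationary solution of the Fokker--Planck equation $\partial_t p=\mathcal{L}^\ast p$ associated with $b$ and $\sigma$, as set up in Section~\ref{Backward Markov Process} --- then the law of $v^t$ converges to $\pi$. The cleanest route keeps the discrete object throughout: $\{v^t\}$ is a (time-inhomogeneous) Markov chain whose kernel $P(v^{t+1}\mid v^t)$ is fixed by $D\circ h$, and one shows its invariant measure coincides, up to the usual $O(\Delta t)$ Euler bias, with that of the diffusion. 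Thus weak solvability yields convergence of $\{v^t\}$ in distribution to the stationary distribution $\pi$.

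The main obstacle is that mere existence of a solution does not by itself produce a $t\to\infty$ limit: one additionally needs the annealing condition $\sigma_t\to0$ in the strong case, or a Lyapunov/drift condition making the diffusion ergodic with a unique invariant law in the weak case. I would therefore fold these mild structural hypotheses into the statement --- they hold for the bounded-denoiser, Lipschitz-measurement setting announced in the abstract --- and lean on standard results (Khasminskii-type nonexplosion and ergodicity criteria, Euler--Maruyama convergence in the spirit of Mattingly--Stuart--Higham, and the Talay--Tubaro analysis of the invariant-measure bias) rather than re-deriving them. The other delicate point is the ``infinitesimal time interval'' passage used informally in Theorem~\ref{main_theorem}; keeping the discrete Markov chain throughout, and transferring the SDE's stationary/fixed-point structure back to it via the Fokker--Planck identification, is what side-steps it.
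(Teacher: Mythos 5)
The paper does not actually prove this proposition: it is stated as the organizing principle of the convergence section and then made true essentially by fiat, because the paper's subsequent definition of weak convergence for PnP \emph{is} the existence of a weak-solution triple for the SDE, and its definition of strong convergence (Cauchy iterates) is identified with the deterministic evolution path of a pathwise-unique strong solution. Under those definitions the implication ``solvable implies convergent'' is close to a tautology, and all the real content is deferred to the two later theorems that verify the Yamada--Watanabe and Stroock--Varadhan hypotheses. Your route is genuinely different and more substantive: you keep the standard meaning of convergence (long-time behaviour of the discrete iterates), interpret the PnP recursion as an Euler--Maruyama scheme, and try to transfer the SDE's asymptotics back to the chain. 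In doing so you correctly expose the gap the paper glosses over: solvability is a finite-horizon well-posedness statement and does not by itself yield any $t\to\infty$ limit, so one must add an annealing condition $\sigma_t\to 0$ (strong arm) or an ergodicity/Lyapunov condition guaranteeing a unique invariant measure (weak arm). That diagnosis is the most valuable part of your proposal and is absent from the paper.

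Two soft spots remain in your sketch. First, Euler--Maruyama convergence theory controls the limit $\Delta t\to 0$ on a compact interval $[0,T]$, whereas PnP runs a fixed discretization out to a large iteration count; these are different limits, and only the invariant-measure-bias results you cite (Talay--Tubaro type) genuinely address the second one, and only in the weak arm. Second, precisely in the weak-solvability regime the paper cares about --- bounded, merely measurable coefficients --- both the Stroock--Varadhan existence theory and the numerical weak-convergence and ergodicity results you lean on require continuity or smoothness of $b$ and $\sigma$ that a generic bounded denoiser residual need not possess, so the step where you ``fold in mild structural hypotheses'' is doing more work than the word ``mild'' suggests. Making those hypotheses explicit would turn your sketch into a correct (and corrected) version of the proposition, rather than a proof of the proposition as literally stated.
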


However, we will show later on that it is non-trival to apply 
the SDE solvability conditions to PnP convergence analysis. In a word, the strong convergence 
condition about PnP mapping function $h(\cdot)$ would be different with the standard 
Lipschitz condition by a constant multiplier, and the weak convergence condition about 
PnP mapping function $h(\cdot)$ would be a relaxed Lipschitz condition, instead of bounded.

\subsection{Preliminary: Strong and Weak Solvability of Ito-type SDE}

Given a typical Ito-type stochastic differential equation: 
\begin{equation}
    dX_t = b(t,X_t)dt + \sigma(t,X_t)dW_t, 
\end{equation}
where $b_i(t,x), \sigma_{ij}(t,x)$ are Borel-measurable functions. 
And $W_t$ is a standard Gaussian process(Wiener process), which is the main reason why 
SDE suffers from the problem of solvability that is different to ODE. 

Briefly summarizing, the strong solvability requires a pathwise consistency (i.e. a determined 
evolution path and only one terminal point as the solution), and the 
strong solvability condition needs $b_i(t,x), \sigma_{ij}(t,x)$ both to be Lipschitz continuous. 
The weak solvability of SDE is under probability law, so the path would not consistent. 
The weak condition only needs $b_i(t,x), \sigma_{ij}(t,x)$ both to be bounded. 
For more details we would like to refer to \cite[Chapter~5]{1998GTM113_SDE}, and we 
also exhibit them in the appendix. 

\subsection{Strong Convergence and Lipschitz Condition}

First, we will define strong convergence for the PnP algorithm. 
\begin{definition}[Strong convergence]\label{strong_convergence}
    Given PnP iteration
    \begin{equation}
        x^{t+1}=D(h(x^t;y), \sigma_t), 
    \end{equation}
    and initial distribution $x^0\sim\xi$,
    strong convergence claims that the intermediate variable sequence 
    $\{x^0,x^1,\cdots,x^T\}$ 
    is Cauchy sequence. 
\end{definition}


Then, we derive the conditions for strong convergence of PnP algorithms as: 
\begin{theorem}[Strong Convergence Conditions]
    Consider a PnP algorithm with iteration: 
    \begin{align}
        x^{t+1}=D(h(x^t;y), \sigma_t). 
    \end{align}
    If both the measurement model $h$ and denoiser $D$ satisfies 
    Lipschitz continuous condition, 
    \begin{align}
        |h(x_{t_1};y)-h(x_{t_2};y)|  &\leq (K+1)|x_{t_1}-x_{t_2}|,  \\
        |D(x_{t_1}, \sigma_{t_1})- D(x_{t_2}, \sigma_{t_2})| &\leq a|x_{t_1}-x_{t_2}|, 
    \end{align}
    Then, strong convergence holds for this PnP iteration. 
\end{theorem}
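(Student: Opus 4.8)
The plan is to show that, for each fixed $t$, the one–step PnP map $T_t(x):=D(h(x;y),\sigma_t)$ is a contraction on $\mathbb{R}^N$ with a Lipschitz constant that is \emph{uniform} in $t$, and then run a Banach–type argument to conclude that the orbit $\{x^t\}$ is Cauchy, which is exactly Definition~\ref{strong_convergence}. First I would compose the two hypotheses: for any $x_1,x_2$,
\begin{align}
    |T_t(x_1)-T_t(x_2)| = |D(h(x_1;y),\sigma_t)-D(h(x_2;y),\sigma_t)| \le a\,|h(x_1;y)-h(x_2;y)| \le a(K+1)\,|x_1-x_2|,
\end{align}
so $T_t$ is Lipschitz with constant $q:=a(K+1)$, independent of $t$. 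When $q<1$ this is a strict contraction, and the usual telescoping estimate $|x^{t+1}-x^{t}|\le q^{t}|x^{1}-x^{0}|$ together with $|x^{t+m}-x^{t}|\le \frac{q^{t}}{1-q}|x^{1}-x^{0}|$ shows $\{x^t\}$ is Cauchy.

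A second route, which is the one that fits the paper's framework, is to invoke Theorem~\ref{main_theorem} to represent the iteration by the It\^o SDE $dv_t=b(t,v_t)\,dt-\sigma(t,v_t)\,dW_t$ with drift $b(x,y)=h(x,y)-x$ and diffusion $\sigma$ depending on $t$ only. The Lipschitz bound on $h$ hands us a spatial Lipschitz bound on $b$ uniform in $t$, the diffusion $\sigma(t,\cdot)$ is constant in space and hence trivially Lipschitz, and a linear–growth bound on $b$ follows from the Lipschitz bound plus one evaluation $|b(t,0)|$. These are precisely the hypotheses of the strong existence-and-uniqueness theorem for It\^o SDEs \cite[Chapter~5]{1998GTM113_SDE}, so the SDE has a pathwise-unique strong solution, and the Bridge Proposition transfers this to strong convergence of the discrete PnP orbit. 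I would present the contraction argument as the self-contained version and the SDE argument as the conceptual companion.

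I expect the main obstacle to be twofold. First, the hypotheses as stated do not contain the smallness condition $q=a(K+1)<1$ that the contraction (and, on the SDE side, the decisive role of the drift's Lipschitz constant) really needs; I would make this explicit and read the appearance of $(K+1)$ in the hypothesis as chosen precisely so that the drift $b=h-\mathrm{id}$ inherits a controlled Lipschitz constant — this is the ``differs from the standard Lipschitz condition by a constant multiplier'' phenomenon flagged earlier. Second, the iteration is non-autonomous because $\sigma_t$ varies with $t$, so no single fixed point need exist; the remedy is to combine the uniform contraction constant $q$ with control of how much the map moves between steps, via
\begin{align}
    |x^{t+1}-x^{t}| \le q\,|x^{t}-x^{t-1}| + \sup_{x}|T_{t}(x)-T_{t-1}(x)|,
\end{align}
and to sum a perturbed geometric series, where the perturbation term is governed by the prescribed $\sigma_t$ schedule (linear or exponential decay). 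On the SDE side the analogous caveat is that the Bridge Proposition is asserted rather than derived, so the final step implicitly uses convergence of the Euler-type discretization of a strongly solvable SDE; that is fine to cite but should be acknowledged.
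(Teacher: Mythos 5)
Your second route is essentially the paper's own proof: invoke Theorem~\ref{main_theorem} to pass to the SDE $dv_t=b(t,v_t)\,dt-\sigma(t,v_t)\,dW_t$ with $b=h-\mathrm{id}$, observe that the Lipschitz hypothesis on $h$ gives a spatial Lipschitz bound on the drift and that the $t$-only diffusion is trivially Lipschitz in space, and then cite the strong solvability machinery of \cite[Chapter~5]{1998GTM113_SDE} (the paper uses the Yamada--Watanabe proposition specifically) before transferring back to the discrete orbit via the Bridge Proposition. Your first route --- composing the two Lipschitz bounds to get a one-step map $T_t$ with constant $q=a(K+1)$ and running the Banach telescoping estimate --- is genuinely different and, in fact, more directly responsive to Definition~\ref{strong_convergence}: it produces the Cauchy property of $\{x^t\}$ explicitly, whereas the paper's argument stops at pathwise uniqueness of the continuous SDE and asserts the discrete conclusion without discretization control. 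Your two flagged obstacles are both real and both present in the paper: the smallness condition $q<1$ is nowhere in the theorem's hypotheses (the paper only concedes, \emph{after} the proof, that an extra scaling condition $a<1$ is needed to match \cite{2016TCI_pnp_nonexpansive}), and the non-autonomy in $\sigma_t$ plus the asserted-not-derived Bridge Proposition are exactly where the paper's argument is thinnest. The one place your write-up is cleaner than it should claim to be is the reading of $(K+1)$: since $b=h-\mathrm{id}$, the triangle inequality only gives $b$ a Lipschitz constant of $K+2$, not $K$ as the paper writes, so the ``constant multiplier'' bookkeeping does not close as neatly as either you or the paper suggests; this is cosmetic for existence/uniqueness but matters if one wants the sharp contraction factor.
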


\begin{proof}
    The discrete iteration of PnP algorithm 
    \begin{equation}
        x^{t+1}=D (h(x^t;y), \sigma_t)
    \end{equation} 
    can be 
    described by a continuous SDE as stated in \ref{main_theorem}: 
    \begin{align}
        x^{t+1}-v^t =b(x^t, y) + \sigma_t^2dW_t 
    \end{align}
    Where $b(x, y)=h(x,y)-x$ is the drift term of the PnP-SDE, the diffusion term $\sigma_t$ 
    only depends on time $t$, which is exactly the variance parameter $\sigma_t$ 
    of the Gaussian denoiser $D(\cdot, \sigma_t)$ of PnP. 
    
    Substituting $h$ by the drift term $b(t,x)$ in SDE gives that 
    \begin{equation}
        |b(t,x)-b(t,y)|\leq K|x-y|. 
    \end{equation}
    With the inverse interpretation of the pure Gaussian process for denoiser $D$, 
    we can constrain the diffusion term by the variance of this Gaussian process: 
    \begin{equation}
        |\sigma(t,x)-\sigma(t,y)|\leq a|x-y|. 
    \end{equation}
    Then, according to Yamada and Watanabe's Theorem 
    \cite[Proposition~5.2.31]{1998GTM113_SDE}, the corresponding SDE 
    \begin{equation}
        dX_t = b(t,X_t)dt + \sigma(t,X_t)dW_t 
    \end{equation}
    maintains strong uniqueness for its strong solution. So that strong 
    convergence holds for the given PnP iteration. 
\end{proof}


Above Lipschitz conditions with an extra scaling condition $a<1$ equal to 
those in \cite{2016TCI_pnp_nonexpansive}, and this scaling condition can be 
achieved by restricting PnP mixing weight $\sigma$ in \equref{eq:denoiser}. 

There is a relaxed Lipschitz condition in \cite{2016TCI_pnp_bounded}: 
\begin{equation}
    |D (x_{t_1}, \sigma_{t_1})-x_{t_1}| \leq \sigma C
\end{equation}
where $C$ is a large enough constant. A simple deduction goes from 
Lipschitz's condition would give:
\begin{align}
    |D (x_{t_1}, \sigma_{t_1})- D (x_{t_2}, \sigma_{t_2})| &\leq a|x_{t_1}-x_{t_2}|  \\
    \Rightarrow \qquad  |D (x_{t_1}, \sigma_{t_1})-x_{t_1} | &\leq \frac{(a-1)}{2}|x_{t_1}|      
\end{align}
So this relaxed Lipschitz condition for the denoiser remains to be a variant of Lipschitz  
condition with the choice $x_{t_2}=0$ and an upper-bound for the variable 
$|x_t| \leq \frac{2\sigma C}{a-1}$. 

Hence, the strong converngence condition of Lipschitz bounded measurement function and 
denoiser in PnP coincides with most previous theoretic analysis of PnP convergence 
guarantee. And the deterministic evolution path of SDE strong solution can be viewed as 
a direct generalization of the fixed point analysis of Partial Differential Equations (PDE) 
described dynamic systems, there is no doubt that a similar methodology would be 
adopted within broad fixed-point-based convergence analysis. 

These conditions bring too strong restrains on the denoiser, for example in 
\cite{2019ICML_pnp-Provably} the author propose to normalize the spectral of neural network 
layers to construct the Lipschitz continuous denoiser. On the contrary,   
unlimited denoisers perform quite well experimentally, even in the starting era of 
PnP-ADMM algorithm\cite{2013GlobalSIP_pnp,2017cvpr_KaiZhang_DnCNN}. 
The gap between theory and application will be bridged by the weak convergence condition in 
the following subsection.

\subsection{Weak Convergence and Bounded Denoiser}

First, we show the weak convergence definition according to the weak solution in SDE 
\cite[Definition~5.3.1]{1998GTM113_SDE}: 

\begin{definition}[Weak Convergence]
    Weak convergence for PnP algorithm is a combination $(X, W), (\Omega,\mathscr{F},P)$,  
    where $W$ is an Brownian motion, $(\Omega,\mathscr{F},P)$ is a probability 
    space, and the distribution of $X$ and $W$ obey a sampling trajectory (a filtration) 
    in $(\Omega,\mathscr{F},P)$. If this combination was given, 
    the iteration $\{x^0,x^1,\cdots,x^T\}$ would be finite valued and the final distribution 
    $P(x^T)$ remains the same under probability law.
\end{definition}

Weak convergence of PnP is defined as related to the distribution along the dynamics. Next, 
we want to derive the existing condition of weak convergence. 



\begin{theorem}[Weak convergence conditions]
    Consider the PnP iteration 
    \begin{equation}
        x^{t+1}=D (h(x^t;y), \sigma_t), 
    \end{equation}
    If the measurement model $h$ is relaxed Lipschitz continuous
    \begin{equation}
        |h(x_t)- x_{t}|\leq \kappa,
    \end{equation}
    while denoiser $D$ is only bounded
    \begin{align}
        |D (x_{t_1}, \sigma_{t_1})| \leq \alpha, 
    \end{align}
    Then, weak convergence holds for this PnP iteration. 
\end{theorem}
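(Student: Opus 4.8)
The plan is to reuse the correspondence established in Theorem~\ref{main_theorem} and then verify the weak-solvability hypotheses for the resulting SDE, in exactly the way the strong-convergence theorem invoked Yamada--Watanabe. First I would recall that, by Theorem~\ref{main_theorem}, the PnP iteration $x^{t+1}=D(h(x^t;y),\sigma_t)$ is described by
\begin{equation}
    dX_t = b(t,X_t)\,dt - \sigma(t,X_t)\,dW_t,
\end{equation}
with drift $b(t,x)=h(x,y)-x$ and diffusion $\sigma(t,x)=\sigma_t$ depending on $t$ only. The task then reduces to checking that $b$ and $\sigma$ are Borel-measurable and bounded, since that is the standing hypothesis for weak existence of an Ito SDE as quoted from \cite[Chapter~5]{1998GTM113_SDE}.

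Next I would bound the two coefficients. The relaxed Lipschitz assumption $|h(x_t)-x_t|\le\kappa$ says precisely that $|b(t,x)|=|h(x,y)-x|\le\kappa$ for all $x$, so the drift is uniformly bounded; measurability of $b$ is inherited from that of $h$. For the diffusion, $\sigma(t,x)=\sigma_t$ is a hyperparameter schedule fixed in advance (linear or exponential decay), hence bounded by $\sup_t\sigma_t<\infty$ and trivially measurable. The boundedness of the denoiser, $|D(x_{t_1},\sigma_{t_1})|\le\alpha$, is what I would use to confine the iterate sequence to the ball of radius $\alpha$, i.e. to make it ``finite valued'' in the sense of the weak-convergence definition; this confinement supplies the tightness of the family of marginal laws that the weak-existence / martingale-problem machinery needs. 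With $b$ and $\sigma$ bounded and measurable, the weak-existence theorem for Ito SDEs (the Skorokhod / Stroock--Varadhan construction in \cite[Chapter~5]{1998GTM113_SDE}) yields a weak solution $(X,W)$ on some $(\Omega,\mathscr{F},P)$. Feeding this into the Bridge proposition then gives weak convergence of the PnP iteration, the limiting stationary distribution being the law of $X$.

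The main obstacle I expect is the diffusion term: boundedness alone is not, strictly speaking, sufficient for weak existence of a general multidimensional SDE — one also needs continuity of $a=\sigma\sigma^{\top}$ (and usually non-degeneracy) for the martingale-problem argument to close. Here this is benign because $\sigma(t,x)=\sigma_t$ depends on $t$ only and the decay schedule can be taken continuous and strictly positive, so $a$ is continuous, bounded, and uniformly elliptic; I would state this regularity explicitly as what we are tacitly using rather than leave it hidden. A secondary subtlety is bridging the \emph{denoiser} hypothesis to the SDE: since $\sigma$ in the SDE is the noise level rather than the denoiser map itself, the bound $|D|\le\alpha$ enters only through confining the state and hence through tightness, and I would make that step precise. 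Finally I would remark that, unlike the Lipschitz route, no contraction or Cauchy structure is invoked, which is exactly why the hypotheses can be weakened to a bounded denoiser and thus cover the unconstrained denoisers used in practice.
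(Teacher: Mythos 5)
Your proposal follows essentially the same route as the paper's own proof: invoke the SDE description from Theorem~\ref{main_theorem}, check that the drift $b(t,x)=h(x,y)-x$ is bounded by $\kappa$ via the relaxed Lipschitz hypothesis, check that the diffusion is bounded, and then cite the Stroock--Varadhan weak-existence theorem to conclude. The one substantive place you diverge is in how the bounded-denoiser hypothesis $|D(x,\sigma)|\le\alpha$ is consumed: the paper asserts directly that ``the diffusion term is the inversion of the denoising process, so that is bounded as well: $\sigma\le\alpha$,'' i.e.\ it reads the denoiser bound as a bound on the diffusion coefficient itself, whereas you bound $\sigma_t$ simply as a preset, finite hyperparameter schedule and instead route $|D|\le\alpha$ into confinement of the state and tightness of the marginal laws. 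Your version is arguably the more defensible reading, since the SDE's diffusion coefficient is the noise standard deviation rather than the denoiser map, so $|D|\le\alpha$ does not literally imply $\sigma\le\alpha$; the paper's step conflates the two. You also correctly flag that Stroock--Varadhan (as quoted in the paper's own appendix) requires the coefficients to be bounded \emph{and continuous}, a regularity condition the paper's proof passes over silently; making the continuity and positivity of the schedule $t\mapsto\sigma_t$ explicit, as you propose, closes a small gap in the published argument rather than opening one.
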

\begin{proof}
    The discrete iteration of PnP algorithm 
    \begin{equation}
        x^{t+1}=D (h(x^t;y), \sigma_t), 
    \end{equation} 
    can be 
    described by a continuous SDE as stated in \secref{Inversed PnP-ADMM}: 
    \begin{align}
        x^{t+1}-v^t =b(x^t, y) + \sigma_t^2 dW_t 
    \end{align}
    Where $b(x, y)=h(x,y)-x$ is the drift term of the PnP-SDE, the diffusion term $\sigma_t$ 
    only depends on time $t$, which is exactly the variance parameter of the Gaussian 
    denoiser $D(\cdot, \sigma_t)$ of PnP. 

    Substituting the measurement model by the drift term in SDE via the equation 
    $h(x_t)=b(x_t)+ x_{t-1}$ gives that: 
    \begin{equation}
        |b(x_t)|\leq \kappa. 
    \end{equation}
    The diffusion term is the inversion of the denoising process, so that is 
    bounded as well: 
    \begin{equation}
        \sigma \leq \alpha. 
    \end{equation}
    Then we adopt Stroock \& Varadhan Theorem \cite{1969_Stroock_Varadhan_DiffusionPW}, 
    that bounded drift and diffusion terms indicate the existence of a weak solution. 
    Then, the corresponding PnP iteration is weakly converged. 
\end{proof}

Consequently, the measurement model of PnP should be Lipschitz continuous, which is not a 
big deal for most image restoration tasks as stated in 
\cite{2016TCI_pnp_nonexpansive,2016TCI_pnp_bounded}. 
The denoiser only needs to be bounded instead of Lipschitz continuous. 

\begin{remark}[Bounded Denoisers]
    Deep denoisers with a bounded output layer, like Sigmoid and Tanh function, would satisfy 
    this weak condition. 
    Even without an explicit bounding tool in the output module, 
    a well-functioning denoiser would be implicitly bounded, since the desired images 
    are within a target distribution embedded on the manifold of all images 
    \cite{1984_structure_of_images,1994_scaleSpaceTheory}. Outliers that diverge violently 
    would be punished so that the nature of denoising encourages bounded conditions. 
\end{remark}

\subsection{Relation with Convergence Conditions for Markov Chain}

We have viewed the PnP iteration sequence as a Markov chain as above. Therefore, an 
inquiry would be, can we analyze the convergence of PnP by means of convergence condition (detailed balance) of the Markov chain? Of course, the convergence property of discrete 
Markov chain is pretty mature, and even its applications are fruitful. 
For example, Markov Chain Monte Carlo 
algorithm \cite{1953Metropolis_MCMC} is founded on detailed balance, 
so that its resulting samples follow that target distribution. 

Some research in statistical physics \cite{1971Physik,1972Physik} has already 
investigated the relationship between the solution of the Fokker-Planck equation and detailed 
balance. Their resulting convergence condition is bounded eigenvalues for the 
coefficients of the Fokker-Planck equation, which is consistent with the weak solution of 
SDE and weak convergence of PnP as above. 
We can concrete PnP convergence in an analog of detailed balance, that the drift term would contribute either tiny enough or pure Gaussian 
noise which can be handled thoroughly by the denoiser.

\section{Related Works}

In the founding work 
of the PnP algorithm \cite{2013GlobalSIP_pnp}, the authors did not discuss the theoretical 
convergence properties of the Plug-and-Play framework. 
They just mentioned that the 
convergence guarantee of ADMM algorithms would succeed in PnP-ADMM. 
Following theoretical research \cite{2016TCI_pnp_nonexpansive} of the PnP algorithm 
presented the sufficient conditions of the denoising operator that strictly 
guarantee the convergence of the PnP algorithm. The non-expansive condition of the 
denoiser makes the denoising operator a proximal mapping. And then, we can obtain an explicit expression of the convex prior function according to the denoiser. 

The non-expansive property is too strong to be satisfied by commonly used 
denoisers like NLM and BM3D. 
\cite{2016TCI_pnp_bounded} gave a clear analysis of the fixed-point 
convergence of the PnP framework by constructing a Cauchy sequence of variables during 
the iteration in a Banach space. This fixed-point convergence relies on the Lipschitz 
continuous property of the likelihood function and the denoiser. However, they 
achieved this only under diminishing stepsizes. Thus, non-expansive condition can easily converted into Lipschitz 
continuous via a scale factor. 

In \cite{2019ICML_pnp-Provably}, the author constructed a Lipschitz-bounded 
deep denoiser to provide provably convergence property for the PnP method. 
PnP-SGD \cite{2022JMIV_PnP_SGD} uses a stochastic gradient descent method to solve the PnP 
problem. Its convergence depends on the contractive residual condition on the denoiser. 
The above theoretical analyses all imposed seemingly strong constraints on the denoiser. 
In the meantime, experimental results \cite{2017cvpr_KaiZhang_DnCNN,2021PAMI_kaizhang_DPIR} 
surprisingly rush ahead of the corresponding theoretical researches. 
Apparently, there is a huge gap between theoretical research and practical experiments. 
Our novel interpretation of the PnP algorithm by SDE could possibly bypass the previously 
indispensable conditions on denoisers, with only a mild assumption of bounded outputs 
of the denoising algorithm embedded.

\section{Examples}
The supportive example is an experimental comparison between PnP algorithms with 
Lipschitz continuous denoiser and bounded denoiser. And the counterexample of unbounded 
denoiser is presented by deduction. 

\subsection{Lipschitz Continuous Denoiser Or Bounded Denoiser}
In the baseline work of Lipschitz continuous deep 
denoiser within PnP\cite{2019ICML_pnp-Provably}, the authors give the relationship 
between different $\alpha$ as the raio parameter and 
contraction factor. With $\sigma^2=\alpha \gamma$ and fixed $\gamma$, the 
parameter $\alpha$ is then closely related to the variance parameter $\sigma$. 
So we adopted the same experimental setting up as in \cite{2019ICML_pnp-Provably}, 
we use the relationship between the ratio parameter $\alpha$ and PSNR. 

We compare PnP with the vanila DnCNN and with the Real Spectral Normalized\cite{2019ICML_pnp-Provably} DnCNN as shown in \tabref{table:alpha_Lipschitz}. 
The previous theoretical analysis in \cite{2019ICML_pnp-Provably} considered the Lipschitz continuous condition of DnCNN is the key for PnP-CNN to 
converge. But in the experimental results below we show that although with little performance drop, the PnP-CNN converges 
as well, instead of diverges. 

\begin{table}[h]
    \caption{PSNR values for convergence of vanila denoiser(PnP-CNN) and 
    Lipschitz continuous denoiser(PnP-RSN-CNN) with varying noise variance 
    related ratio parameter $\alpha$.
    }
    \label{table:alpha_Lipschitz}
    \vskip 0.15in
    \begin{center}
    \begin{small}
    \begin{sc}
    \begin{tabular}{lccccccc}
    \toprule
      $\alpha$ & 0.001 & 0.003 & 0.01 & 0.03 & 0.1 & 0.3 & 1.0   \\
    \midrule
    PnP-RSN-CNN   &4.74984      & 5.33115       &6.75710      & 9.11250       &12.95924      & 17.1120       & 19.47662         \\
    PnP-CNN   & 3.37694      & 4.10075       &5.63357      & 7.86317       &12.23303      & 16.9121       & 19.39487      \\
    \bottomrule
    \end{tabular}
    \end{sc}
    \end{small}
    \end{center}
    \vskip -0.1in
\end{table}

We would also present in \figref{pic:twoModel} the evolution dynamics of above two methods 
when $\alpha=1.0$. 
The resulting PSNR curves of these two methods are close 
(even superposition) with each other, and hard to tell between them. 
We can see that the main advantage of Lipschitz continuous denoiser is their 
smoother trajectories. As a conclusion, with/without Lipschitz continuous denoisers, 
the PnP algorithm converges, which indicates that the Lipschitz condition is not 
a necessary guarantee for PnP convergence.  

\definecolor{orange}{rgb}{0.8, 0.4, 0.1}
\definecolor{indigo}{rgb}{0.1, 0.4, 0.6}
\begin{figure}[h]
    \centering
    \includegraphics[width=1.0\columnwidth]{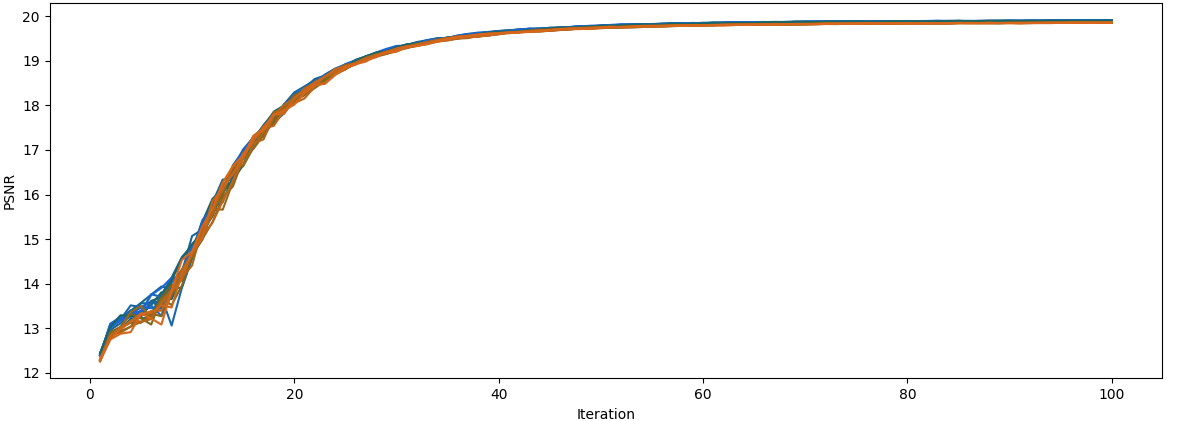}
    \caption{PSNR curves during PnP iteration comparing two methods. 
    \textcolor{orange}{Orange} lines show the results of PnP with Lipschitz 
    continuous denoisers (PnP-RSN-CNN), while \textcolor{indigo}{Indigo} lines are with 
    Lipschitz discontinuous denoisers (PnP-CNN). Lipschitz continuous 
    denoiser ensures a smoother trajectory. 
    In general, both Lipschitz continuous and discontinuous denoiser 
    converges.}
    \label{pic:twoModel}
\end{figure}

\subsection{Unbounded denoiser}

A counterexample of divergence PnP iteration 
may employ an unbounded denoiser. 
\begin{remark}[PnP Divergence]
    PnP iteration $v^{t+1}=D(h(v^t;y), \sigma_t)$ with an unbounded denoiser $D(\cdot)$ 
    would diverge unless with a contractive measurement function $h(\cdot)$. 
\end{remark}
Usually, the measurement function would not perform as contractive operator, 
thus do no good to the convergence of PnP. Hence, PnP with unbounded denoiser 
would diverge in general. 

An unbounded denoiser would not perform well in denoising tasks, instinctively. 
At least, a well-performed denoiser is bounded by these pixel values' range. 
Even a deep denoiser without the explicit restriction of its output range would follow 
an implicit bounded strategy if it was trained on an elaborate dataset. 
Since the training dataset is bounded beforehand, extreme pixel values are doomed to be 
penalized for the extreme gain of loss. 
\begin{remark}[Bounded Denoiser]
    With empirical risk minimization training of deep denoiser, the expected risk would 
    be the approximating error of training, the exceed risk corresponds to the 
    generalization error. 
    \begin{align}
        \ell_{empirical} = \ell_{expected} + \ell_{exceed}. 
    \end{align}
    With over-parameterized deep neural network and sufficient amount of data for training, 
    the expected risk would approximate zero. 
    For Gaussian distributed noise, the exceed risk would be constrained 
    between the range of $[\mu-3\sigma, \mu+3\sigma]$ 
    with parabability of 99.74\%. Note that we set the sequence $\sigma_t$ to decay  
    as $t$ grows. So the bounded denoiser within range $[-256-3\sigma_T, 256+3\sigma_T]$
    is guaranteed with high probability, where $T$ is the stopping time of PnP. 
\end{remark}


\section{Conclusion}

To close the gap between practical PnP with advanced denoiser prior and theoretical 
convergence analysis, we considered the continuous counterpart of the PnP algorithm, 
and related it to stochastic differential equations. 

Then, with the aid of the solvability theory of SDE, we 
give a unified framework for convergence analysis of the PnP algorithm. 
Strong solution corresponds to Lipschitz's continuous condition in previous research. 
With the weak solution under probability law, 
we proposed weak convergence for PnP analysis. Then, we obtained a much weaker convergence 
condition of only bounded denoisers and Lipschitz continuous measurement models.
Finally, we give 
a few counterexamples to further illustrate our convergence theory for PnP algorithms.

Most current SOTA denoisers are not Lipschitz continuous but bounded inherently, 
provided that the denoising task functions as expected. So weak 
conditions are more suitable for practical denoisers than previous ones, which would 
further potentiate a more confident usage of the PnP algorithm with more advanced 
bounded denoisers.  




{
\small




\bibliography{main}
\bibliographystyle{plain}
}

\appendix

\section{Related Works}
Other than Plug-and-Play ADMM framework, there are also regularization by denoiser (RED) 
and consensus equilibrium (CE) frameworks as the variants of PnP. 
We also review them and beyond for the completeness of the related works.

\subsection{Convergence Analysis of RED}
Instead of proximal mapping theory in PnP-ADMM, the regularization by denoiser \cite{2017SIAM_RED} 
framework 
offered a systematic use of denoisers for regularization in inverse problems. 
The regularization term is  
\begin{equation}
    \rho(x) = \frac{1}{2} x^T[x-f(x)], 
\end{equation}
where the denoiser takes the candidate's image as input. 
Intuitively, the penalty term is proportional to the inner product between the image and its denoising residual. 
With the aid of the RED framework, the authors proposed the following necessary 
conditions on the denoiser $f(x)$ to guarantee convergence: 
\begin{itemize}
    \item Local Homogeneity: $\forall x\in \mathbb{R}^n, f((1+\epsilon)x)=(1+\epsilon)f(x)$ for sufficiently small $\epsilon >0$.
    \item Differentiability: The denoiser $f(\cdot)$ is differentiable where $\nabla f$ denotes its Jacobian. 
    \item Jacobian Symmetry: $\nabla f(x)^T=\nabla f(x), \forall x\in\mathbb{R}^n$. 
    \item Strong Passivity: The spectral radius the Jacobian satisfies $\eta(\nabla f(x))\leq 1$. 
\end{itemize}

There are two highlighting researches about improving RED. 
\cite{2019TCI_scoreMatching_RED_PnP}
interpreted the denoiser using score-matching 
theory, and bypassed the constraint of Jacobian symmetry. 
\cite{2021SIAM_RED-PRO} introduced the assumption of demi-contractive denoisers, 
which is a relaxation compared to the previous convergence analysis under the fixed-point 
theory. 

\subsection{Other Convergence Analysis of PnP}
Another mainstream framework is consensus equilibrium. 
In \cite{2018SIAM_consensus_equilibrium}, the authors used the convergence of Mann 
iteration during consensus 
equilibria solving to guarantee the convergence of the corresponding PnP algorithm. 

Deep unfolding methods \cite{2014_deepUnfolding} also constitute a particular 
viewpoint of the PnP algorithm. 
\cite{2019PAMI_lyapunov} centered around the denoising deep neural network 
and treated the observation model as a priori. Then, they proposed a Lyapunov function-based 
convergence guarantee. This Lyapunov condition, again, assured that the denoiser should 
be Lipschitz bounded. 

The convergence of other variable splitting algorithms, like 
ISTA \cite{2016SIAM_ISTA_FISTA_convergence,2018NIPS_ISTA_convergence}, 
AMP \cite{2014_AMP_convergence} etc., are also 
analyzed via fixed-point mathematic tools.



\section{Experiments}

According to previous theoretical convergence analysis\cite{2016TCI_pnp_nonexpansive}, 
the vanila CNN is expansive and without Lipschitz continuous condition, 
so vanila PnP-CNN should diverge. But in fact the vanila PnP-CNN 
converges in so many previous works and performs pretty well in downstream 
application of image restoration. 
We added extra experiments to investigate the performance 
With/Without Lipschitz continuous condition over different noise variance.

\subsection{PnP-SDE Implementation}

We would call our proposed algorithm PnP-SDE. To keep consistency with the SDE description, 
we need to add noise during iteration in order to converge 
in the sense of probability law instead of pathwise. Because the denoising process of a 
give trained denoiser is deterministic, without stochastic trajectory during denoising. 
We should elucidate the denoiser via its 
variance $\sigma$. 

Normally, PnP-ADMM algorithms would use a scale factor to control the dynamics of 
PnP iteration \cite{2013GlobalSIP_pnp,2016TCI_pnp_bounded,2016TCI_pnp_nonexpansive,2021NIPS_diffusion_pnp}. 
We can correspond the scale factors to the mixing weight and the Lagrangian multiplier. 
Intuitively, this progressive evolution would stratify the processing of 
different granular features so that each feature's granules would be less degraded by 
more intense noise levels. 

However, from the viewpoint of the philosophy of fundamentalism, \cite{2019ICML_pnp-Provably}
insists that we should analyze the PnP convergence property without these scale factors. To some extent, this research is more like an 
ablation variant of the PnP algorithm, which violates the usual setting for PnP. 
Our formulation of the SDE description for PnP follows the usual setting, which also intends 
to satisfy the diffusion analog of the inverse of denoising. 

For the demonstration of counterexample, Lipschitz discontinuous denoiser, we would 
adopt the same fundamentalism-style setting that discards the confounding of scaling 
factors. 
More 
detailed experiments are as follows.


\subsection{Weak Convergence}

Weak convergence with random noise added during PnP iteration is similar to 
the implementation in \cite{2021NIPS_diffusion_pnp}. However, a distinctive difference is 
that our PnP-SDE models the noise within each iteration using a precise variance 
parameter $\sigma$, while \cite{2021NIPS_diffusion_pnp} employed bias-free 
denoiser \cite{2020ICLR_biasFree} to achieve a generalization over different noise levels implicitly. So we implement our proposed PnP-SDE using DnCNN denoiser \cite{2017cvpr_KaiZhang_DnCNN,2021PAMI_kaizhang_DPIR} with explicit 
variance $\sigma$ as an extra channel of inputs. 

\definecolor{purple}{rgb}{0.5,0,0.5}
\begin{figure}[th]
    \centering
    \includegraphics[width=.5\columnwidth]{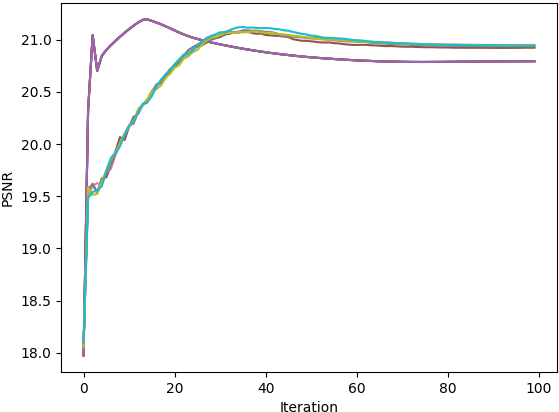}
    \caption{PSNR curves during PnP iteration. \textcolor{purple}{Purple} lines 
    are under strong convergence, and they clearly demonstrate pathwise consistency. 
    These lines with multiple colors are under weak convergence so that they go 
    through different trajectories. 
    Weak convergence also converges at the end of each trajectory, 
    and achieve a bonus on performance.}
    \label{pic:weakConverge_psnr}
\end{figure}
\begin{figure}[th]
    \centering
    \includegraphics[width=.5\columnwidth]{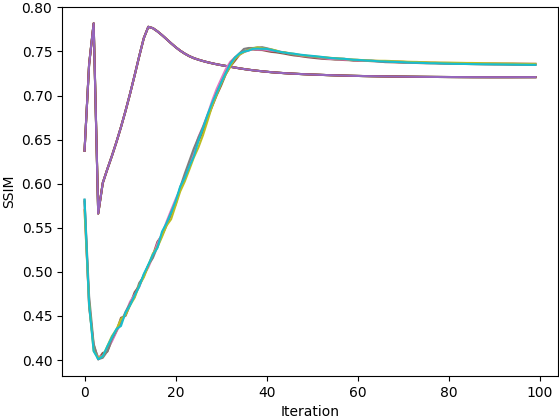}
    \caption{SSIM curves during PnP iteration. The instability at the beginning is due 
    to the independence of SSIM value and noise level. 
    \textcolor{purple}{Purple} lines 
    are under strong convergence and are pathwise unique. 
    These lines with multiple colors are under weak convergence, with 
    different trajectories. 
    Weak convergence also gains a bonus on performance.}
    \label{pic:weakConverge_ssim}
\end{figure}

\begin{table}[h]
    \caption{PSNR and SSIM results for strong and weak convergence. The results shows 
    that both of them converge, although with a little performace gap. }
    \label{table:strong_weak}
    \vskip 0.15in
    \begin{center}
    \begin{small}
    \begin{sc}
    \begin{tabular}{lcccr}
    \toprule
        & PSNR & SSIM  \\
    \midrule
    Strong   & 20.5232& 0.7207 \\
    Weak   & 20.6417& 0.7361\\
    \bottomrule
    \end{tabular}
    \end{sc}
    \end{small}
    \end{center}
    \vskip -0.1in
\end{table}

In \figref{pic:weakConverge_psnr} and \figref{pic:weakConverge_ssim}, we can see that 
PnP converges both under strong and weak convergence situations. While strong 
convergence is pathwise unique, weak convergence is in the sense of probability law, 
with different trajectories. Surprisingly, as analyzed before, noises added in 
weak convergence help the PnP algorithm to escape from local minima. Therefore, this achieves extra performance gains, as shown in \tabref{table:strong_weak}. 

\subsection{Lipschitz discontinuous denoiser}

A simple question would be, is there a counterexample that transgresses 
previously proposed conditions on denoisers, but still converges. 

Firstly, we can unify previously proposed constraints on the denoisers 
using Lipschitz bounded \cite{2019ICML_pnp-Provably}. 
Other equivalent expressions like non-expansive \cite{2016TCI_pnp_nonexpansive}, 
contractive property, demi-contractive \cite{2021SIAM_RED-PRO}, can be view as the 
a scaled version of it. 


\cite{2019ICML_pnp-Provably} proposed to constrain the denoiser's Lipschitz constant by spectral normalization and 1-Lipschitz non-linearity. 
Even in their experiments, we can see that the PnP algorithm without Lipschitz continuous 
denoisers also converges to the desired image distribution. As they 
demonstrated, there is only a tiny difference between the PSNR of Lipschitz continuous and 
discontinuous ones. 


\definecolor{orange}{rgb}{0.8, 0.4, 0.1}
\definecolor{indigo}{rgb}{0.1, 0.4, 0.6}
\begin{figure}[h]
    \centering
    \includegraphics[width=.5\columnwidth]{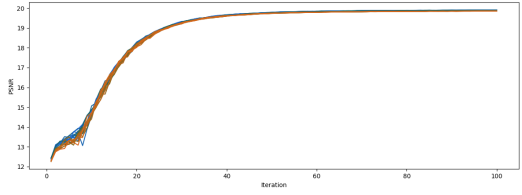}
    \caption{Performance curves during PnP iteration under weak convergence. 
    \textcolor{orange}{Orange} lines show the results of PnP with Lipschitz 
    continuous denoisers, while \textcolor{indigo}{indigo} lines are with Lipschitz 
    discontinuous denoisers. Lipschitz continuous 
    denoiser ensures a slightly smoother trajectory and also shows a tiny performance 
    gain. In general, both Lipschitz continuous and discontinuous denoiser converges 
    under weak probability uniqueness.}
    \label{pic:twoModel}
\end{figure}

In fact, practical PnP algorithms up-to-date with deep denoisers have already given fruitful expositions without divergence problems. Here, we show the performance 
curve under weak convergence in \figref{pic:twoModel}. We can see that Lipschitz's 
continuous denoiser makes their corresponding trajectories smoother. And this is consistent with what we anticipated. A tiny performance gain is also reasonable for the 
robustness of the Lipschitz continuous denoiser. However, this is a bonus, not the main dish. Apparently, both Lipschitz continuous and discontinuous denoisers converge to the terminal without oscillation. Note that the setting in 
\cite{2019ICML_pnp-Provably} discards the scale factor, which we also adopt here so that we would ignore the disturbance of scaling.

\subsection{Unbounded denoiser}

For the existence of SDE's weak solution, we need a mild condition of bounded drift and 
diffusion term. Therefore, the convergence of PnP-SDE iteration can be guaranteed by 
Lipschitz continuous drift term and bounded diffusion term. 
This condition is so loose that a counterexample 
should employ an 
unbounded denoiser. Apparently, it would diverge under this circumstance. 
An unbounded denoiser would not perform well in denoising tasks, instinctively. 

At least, a well-performed denoiser is bounded by these pixel values' range. 
Even a deep denoiser without the explicit restriction of its output range would follow 
an implicit bounded strategy if it was trained on an elaborate dataset. 
Since the training dataset is bounded beforehand, extreme pixel values are doomed to be 
penalized. 

So we have to stop after the above deduction because unbounded denoiser 
exceed the range of practical implementations. 

\section{SDE Strong Solvability}\label{Strong solution definition}
Given a typical stochastic differential equation 
\begin{equation}
    dX_t = b(t,X_t)dt + \sigma(t,X_t)dW_t, 
\end{equation}
where $b_i(t,x), \sigma_{ij}(t,x)$ are Borel-measurable functions. 
The definition of a strong solution for SDE is as follows:

\begin{definition}[Strong solution]
    Given a probability space $(\Omega,\mathscr{F},P)$, the fixed Brownian motion $W$ 
    and initial condition $\xi$, a strong solution of the stochastic differential 
    equation is a process $X=\{X_t;0\leq t<\infty\}$ with continuous sample paths and 
    with the following properties: 
    \begin{itemize}
        \item X is adapted to the filtration $\{\mathscr{F}_t\}$,
        \item $P[X_0=\xi]=1$,
        \item $P[\int_0^t \{|b_i(s,X_s)|+\sigma_{ij}^2 (s,X_s)\}ds<\infty]=1$ holds for every $1\leq i\leq d, 1\leq j\leq r$ and $0\leq t<\infty$, 
        \item the integral 
        \begin{equation}
            X_t = X_0 + \int_0^t b(s,X_s)ds + \int_0^t \sigma(s,X_s)dW_s,
        \end{equation}
        holds almost surely, for $0\leq t<\infty$. 
    \end{itemize}
\end{definition}

Next, we would investigate the existence and uniqueness of strong solutions for a given 
SDE. 

\begin{theorem}[Strong Pathwise Uniqueness]
    Suppose that the coefficients $b(t,x), \sigma(t,x)$ are locally Lipschitz-continuous 
    in the space variable; i.e., for every integer $n\leq 1$ there exists a constant 
    $K_n>0$ such that for every $t\geq 0, \|x\|\leq n$ and $\|y\|\leq n$: 
    \begin{equation}
        \|b(t,x)-b(t,y)\| + \|\sigma(t,x)-\sigma(t,y)\| \leq K_n \|x-y\|. 
    \end{equation}
    Then strong uniqueness holds. 
\end{theorem}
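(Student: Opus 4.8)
The plan is to establish pathwise uniqueness by the standard localization-plus-Gronwall argument for It\^o SDEs with only locally Lipschitz coefficients. Suppose $X$ and $\tilde X$ are two strong solutions driven by the \emph{same} Brownian motion $W$ on the \emph{same} probability space, with $P[X_0 = \tilde X_0 = \xi] = 1$. Since the Lipschitz estimate is only local, I cannot control $X_t - \tilde X_t$ globally in one shot; the first move is therefore to stop the processes before either one leaves a ball of radius $n$.

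First I would introduce, for each integer $n \ge 1$, the stopping time
\[
\tau_n = \inf\{t \ge 0 : \|X_t\| \ge n \text{ or } \|\tilde X_t\| \ge n\}.
\]
Because a strong solution has continuous sample paths on $[0,\infty)$ by definition, every path is bounded on compact time intervals, so $\tau_n \uparrow \infty$ almost surely. Setting $\Delta_t := X_{t \wedge \tau_n} - \tilde X_{t \wedge \tau_n}$ and subtracting the two defining integral equations gives
\[
\Delta_t = \int_0^{t \wedge \tau_n}\bigl(b(s,X_s) - b(s,\tilde X_s)\bigr)\,ds + \int_0^{t \wedge \tau_n}\bigl(\sigma(s,X_s) - \sigma(s,\tilde X_s)\bigr)\,dW_s ,
\]
and on the event $\{s \le \tau_n\}$ both $\|X_s\|$ and $\|\tilde X_s\|$ are at most $n$, so the hypothesis applies with the local constant $K_n$.

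Next I would square, take expectations, and use $\|a+b\|^2 \le 2\|a\|^2 + 2\|b\|^2$ together with the Cauchy--Schwarz inequality on the drift integral and the It\^o isometry on the stochastic integral. For $t \in [0,T]$ this yields
\[
E\|\Delta_t\|^2 \le 2T\, E\!\int_0^{t}\mathbf{1}_{\{s \le \tau_n\}}\|b(s,X_s) - b(s,\tilde X_s)\|^2\,ds + 2\, E\!\int_0^{t}\mathbf{1}_{\{s \le \tau_n\}}\|\sigma(s,X_s) - \sigma(s,\tilde X_s)\|^2\,ds .
\]
Applying $\|b(s,X_s)-b(s,\tilde X_s)\|^2 + \|\sigma(s,X_s)-\sigma(s,\tilde X_s)\|^2 \le 2K_n^2\|\Delta_s\|^2$ on $\{s\le\tau_n\}$ reduces this to an inequality of the form $\varphi(t) \le C_{n,T}\int_0^t \varphi(s)\,ds$ with $\varphi(t) := E\|\Delta_{t}\|^2$ and no additive term. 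Gronwall's lemma then forces $\varphi \equiv 0$ on $[0,T]$, hence $X_{t\wedge\tau_n} = \tilde X_{t\wedge\tau_n}$ almost surely for every $t$; letting $n \to \infty$ and using $\tau_n \uparrow \infty$ with path continuity gives $X_t = \tilde X_t$ for all $t$ almost surely, i.e. the solutions are indistinguishable, which is strong uniqueness.

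The main obstacle is the justification that the stochastic integral term has zero expectation (equivalently, is a genuine martingale, not merely a local martingale): this is precisely why the localization by $\tau_n$ is indispensable, since stopping at $\tau_n$ makes the integrand bounded and turns the local Lipschitz bound into a bona fide square-integrability estimate on $[0,T]$, so that the It\^o isometry in the displayed inequality is legitimate. A secondary point requiring care is that pathwise uniqueness is asserted for a fixed probability space and a fixed $W$, so one must keep $W$ common to both solutions throughout; the argument above does exactly that by never leaving the given filtered space.
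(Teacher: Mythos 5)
The paper does not actually prove this theorem: it is quoted verbatim (as Theorem 5.2.5) from the Karatzas--Shreve reference and stated in the appendix without argument. Your proposal supplies exactly the standard proof from that source, and it is correct: localize with $\tau_n = \inf\{t : \|X_t\|\ge n \text{ or } \|\tilde X_t\|\ge n\}$ so that the local Lipschitz bound with constant $K_n$ applies along the stopped paths, subtract the two integral equations, control the drift difference by Cauchy--Schwarz and the martingale difference by the It\^o isometry (legitimate because the stopped integrand is bounded by $2nK_n$), and close with Gronwall's lemma applied to $\varphi(t)=E\|X_{t\wedge\tau_n}-\tilde X_{t\wedge\tau_n}\|^2$, which is bounded by $4n^2$ and hence admissible for Gronwall; then let $n\to\infty$. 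Your identification of the two delicate points --- that the stochastic integral is a genuine $L^2$ martingale only after stopping, and that pathwise uniqueness fixes the probability space and the driving Brownian motion --- is exactly right. The only cosmetic remark is that the theorem as transcribed in the paper says ``for every integer $n\leq 1$,'' which is a typo for $n\geq 1$; your proof implicitly uses the correct version.
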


\begin{theorem}[Strong existence]
    Suppose that the coefficients $b(t,x),\sigma(t,x)$ satisfy the global Lipschitz and 
    linear growth conditions 
    \begin{align}
        &\|b(t,x)-b(t,y)\| + \|\sigma(t,x)-\sigma(t,y)\| \leq K \|x-y\|, \\
        &\|b(t,x)\|^2 + \|\sigma(t,x)\|^2 \leq K^2 (1+\|x\|^2). 
    \end{align}
    for every $0\leq t<\infty, x\in \mathbb{R}^d, y\in\mathbb{R}^d$, where $K$ is a positive 
    constant. On some probability space $(\Omega, \mathscr{F}, P)$, let $\xi$ be an 
    $\mathbb{R}^d$-valued random vector, independent of the $r$-dimensional Brownian motion 
    $W=\{W_t, \mathbb{F}_t^W, 0\leq t<\infty\}$, and with finite second moment: 
    \begin{equation}
        E\|\xi\|^2<\infty.
    \end{equation}
    Let $\{\mathscr{F}_t\}$ be the filtration. Then there exists a continuous, adapted process 
    $X=\{X_t, \mathscr{F}_t, 0\leq t<\infty\}$ which is a strong solution of the stochastic 
    differential equation relative to $W$, with initial condition $\xi$. 
    
    Moreover, this process is square-integrable: for every $T>0$, there exists a constant 
    $C$, depending only on $K$ and $T$, such that 
    \begin{equation}
        E\|X_t\|^2 \leq C(1+E\|\xi\|^2)e^{Ct}, 0\leq t<T. 
    \end{equation}
\end{theorem}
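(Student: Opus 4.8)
The plan is to construct the solution by Picard--Lindel\"of successive approximations, following the classical It\^o existence argument (cf.\ \cite[Chapter~5]{1998GTM113_SDE}). First I would set $X_t^{(0)}\equiv\xi$ and define inductively
\[
X_t^{(k+1)} = \xi + \int_0^t b(s,X_s^{(k)})\,ds + \int_0^t \sigma(s,X_s^{(k)})\,dW_s ,
\]
verifying at each stage that the right-hand side is a well-defined, continuous, adapted process with $\sup_{0\le s\le T}\mathbb{E}\|X_s^{(k)}\|^2<\infty$; here the linear growth bound controls the drift through the Cauchy--Schwarz inequality and the diffusion through It\^o's isometry.

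The heart of the proof is a recursive $L^2$ estimate for the successive increments. Writing $\Delta_t^{(k)}:=X_t^{(k+1)}-X_t^{(k)}$, I would bound the drift part with Cauchy--Schwarz and the martingale part with Doob's $L^2$ maximal inequality, then apply the global Lipschitz hypothesis $\|b(t,x)-b(t,y)\|+\|\sigma(t,x)-\sigma(t,y)\|\le K\|x-y\|$ to arrive at an inequality of the form
\[
\mathbb{E}\Big[\sup_{0\le s\le t}\|\Delta_s^{(k)}\|^2\Big]\ \le\ C_T\int_0^t \mathbb{E}\Big[\sup_{0\le r\le s}\|\Delta_r^{(k-1)}\|^2\Big]\,ds .
\]
Iterating yields $\mathbb{E}\big[\sup_{0\le s\le T}\|\Delta_s^{(k)}\|^2\big]\le A\,(C_T T)^k/k!$, which is summable in $k$. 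Chebyshev's inequality and the Borel--Cantelli lemma then show that $\sum_k\sup_{0\le s\le T}\|\Delta_s^{(k)}\|<\infty$ almost surely, so the iterates converge uniformly on $[0,T]$; the limit $X$ is therefore continuous and adapted, and the convergence also holds in $L^2$, uniformly on $[0,T]$.

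Next I would pass to the limit in the defining recursion: the Lipschitz property together with $L^2$ convergence of $X^{(k)}$ forces $\int_0^t b(s,X_s^{(k)})\,ds\to\int_0^t b(s,X_s)\,ds$ and, via It\^o's isometry, $\int_0^t\sigma(s,X_s^{(k)})\,dW_s\to\int_0^t\sigma(s,X_s)\,dW_s$ in $L^2$, so $X$ satisfies the integral equation almost surely; the integrability requirement $P\big[\int_0^t(|b_i(s,X_s)|+\sigma_{ij}^2(s,X_s))\,ds<\infty\big]=1$ follows from the linear growth bound and the second-moment control already obtained. For the quantitative estimate, I would introduce a localizing stopping time $\tau_n$, apply It\^o's isometry and the linear growth condition to $\varphi(t):=\mathbb{E}\|X_{t\wedge\tau_n}\|^2$ to get $\varphi(t)\le 3\mathbb{E}\|\xi\|^2+C\int_0^t(1+\varphi(s))\,ds$, and conclude with Gr\"onwall's inequality, letting $n\to\infty$ via Fatou to obtain $\mathbb{E}\|X_t\|^2\le C(1+\mathbb{E}\|\xi\|^2)e^{Ct}$.

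The main obstacle is getting the recursive $L^2$-sup estimate right: the Doob inequality and It\^o isometry bookkeeping must be arranged so that the Lipschitz constant enters linearly and the iteration produces the factorial gain $(C_T T)^k/k!$ — this is what makes the scheme converge and what delivers continuity of the limit path. Everything after that (the Borel--Cantelli upgrade to almost-sure uniform convergence, passing to the limit in the integrals, and the Gr\"onwall step for the moment bound, which also yields the strong uniqueness of the previous theorem when applied to the difference of two solutions) is routine. A secondary subtlety is the localization needed to justify the Gr\"onwall computation before $\mathbb{E}\|X_t\|^2$ is known to be finite.
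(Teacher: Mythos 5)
Your proposal is correct: it is the classical Picard--Lindel\"of successive-approximation argument (Doob's maximal inequality plus the It\^o isometry for the recursive $L^2$ estimate, the factorial bound with Borel--Cantelli for almost-sure uniform convergence, and a localized Gr\"onwall step for the moment bound), which is exactly the proof of this theorem in the cited reference \cite[Chapter~5]{1998GTM113_SDE}. The paper itself states this result without proof, deferring to that source, so there is nothing to contrast: your argument is the intended one and is complete.
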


There is a further proposition on the uniqueness that disjoins the condition of drift 
term and diffusion term. 

\begin{proposition}[Yamada \& Watanabe]
    Let us suppose that the coefficients of the one-dimensional equation $(d=r=1)$
    \begin{equation}
        dX_t = b(t,X_t)dt + \sigma(t,X_t)dW_t 
    \end{equation}
    satisfy the conditions 
    \begin{align}
        |b(t,x)-b(t,y)|\leq K|x-y|  \\
        |\sigma(t,x)-\sigma(t,y)|\leq a|x-y|
    \end{align}
    for every $0\leq t<\infty$ and $x\in \mathbb{R}, y\in \mathbb{R}$, where $K$ is a 
    positive constant and $a:[0,\infty)\rightarrow [0,\infty)$ is a strictly increasing 
    function with $a(0)=0$ and 
    \begin{equation}
        \int_{(0,\varepsilon)} a^{-2}(u)du=\infty, \forall \varepsilon>0. 
    \end{equation}
    Then strong uniqueness holds for the stochastic differential equation. 
\end{proposition}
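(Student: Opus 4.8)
The plan is to prove pathwise (strong) uniqueness by the classical It\^o--Tanaka smoothing technique, which is the substance of the Yamada--Watanabe argument. Let $X$ and $\tilde X$ be two strong solutions driven by the same Brownian motion $W$ with the same initial condition $\xi$, and set $\Delta_t = X_t - \tilde X_t$, so $\Delta_0 = 0$ almost surely. The goal is to show $\Delta_t = 0$ for all $t\ge 0$ almost surely; since both solutions have continuous sample paths by definition, this upgrades to indistinguishability and hence to strong uniqueness. Note that when $a$ is linear this would already follow from the local-Lipschitz uniqueness theorem stated earlier; the work is entirely in accommodating the weaker modulus-of-continuity condition on $\sigma$ (the borderline case being $\tfrac12$-H\"older).

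First I would construct a sequence of $C^2$ approximations to the absolute value. Using the divergence hypothesis $\int_{(0,\varepsilon)} a^{-2}(u)\,du = \infty$, I would pick a strictly decreasing sequence $1 = a_0 > a_1 > \cdots \downarrow 0$ with $\int_{a_n}^{a_{n-1}} a^{-2}(u)\,du = n$, then choose continuous functions $\rho_n$ supported in $(a_n, a_{n-1})$ with $0\le \rho_n(u) \le \frac{2}{n\, a^2(u)}$ and $\int \rho_n = 1$. Setting $\psi_n(x) := \int_0^{|x|}\!\!\int_0^y \rho_n(u)\,du\,dy$, one gets an even $C^2$ function with $0\le \psi_n(x)\le |x|$, $\psi_n(x)\uparrow|x|$, $|\psi_n'|\le 1$, and crucially $0\le \psi_n''(x) = \rho_n(|x|)\le \frac{2}{n\,a^2(|x|)}$.

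Next I would apply It\^o's formula to $\psi_n(\Delta_t)$. Writing $\beta_s = b(s,X_s)-b(s,\tilde X_s)$ and $\gamma_s = \sigma(s,X_s)-\sigma(s,\tilde X_s)$,
\[
\psi_n(\Delta_t) = \int_0^t \psi_n'(\Delta_s)\beta_s\,ds + \int_0^t \psi_n'(\Delta_s)\gamma_s\,dW_s + \tfrac12\int_0^t \psi_n''(\Delta_s)\gamma_s^2\,ds.
\]
Introducing the localizing stopping times $\tau_N = \inf\{t: |X_t|\vee|\tilde X_t|\ge N\}$ makes the stochastic integral a true martingale, so taking expectations at time $t\wedge\tau_N$ removes it. For the second-order term, combining $\psi_n''(\Delta_s)\le \frac{2}{n\,a^2(|\Delta_s|)}$ with $\gamma_s^2 \le a^2(|\Delta_s|)$ bounds the integrand by $\tfrac{2}{n}$, so that term is at most $t/n$; for the drift term, $|\psi_n'|\le 1$ together with the Lipschitz bound $|\beta_s|\le K|\Delta_s|$ gives the bound $K\int_0^t E|\Delta_{s\wedge\tau_N}|\,ds$. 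Hence
\[
E\,\psi_n(\Delta_{t\wedge\tau_N}) \le \frac{t}{n} + K\int_0^t E|\Delta_{s\wedge\tau_N}|\,ds.
\]
Letting $n\to\infty$, monotone convergence gives $\psi_n(\Delta_{t\wedge\tau_N})\to|\Delta_{t\wedge\tau_N}|$, so $E|\Delta_{t\wedge\tau_N}| \le K\int_0^t E|\Delta_{s\wedge\tau_N}|\,ds$; Gronwall forces $E|\Delta_{t\wedge\tau_N}| = 0$ for all $t$ and $N$, and letting $N\to\infty$ (with $\tau_N\to\infty$ by path continuity) and applying Fatou yields $\Delta_t = 0$ a.s. for every $t$, which path continuity promotes to $P[\Delta_t = 0\ \forall t]=1$.

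The main obstacle is the delicate construction of the cutoff functions $\psi_n$: the argument succeeds only because one can simultaneously enforce $\psi_n''\cdot a^2 \le 2/n$ while retaining $\psi_n\to|x|$, and this is precisely where the hypothesis $\int_{0+} a^{-2} = \infty$ is consumed --- without it the second-order contribution cannot be made to vanish. A secondary but necessary technical point is the localization bookkeeping (guaranteeing the stochastic integral is a genuine martingale and that all expectations are finite without assuming a priori integrability of the solutions), which is routine but must be stated carefully.
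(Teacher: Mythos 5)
Your proof is correct: it is the classical Yamada--Watanabe argument via the $C^2$ approximations $\psi_n$ of $|x|$, It\^o's formula, localization, and Gronwall, which is exactly the proof in the cited reference (Karatzas--Shreve). The paper itself states this proposition without proof, so there is nothing to contrast with; your write-up supplies the standard argument the paper implicitly relies on, and the construction of $\rho_n$, the bound $\psi_n''\,\gamma^2\le 2/n$, and the stopping-time bookkeeping are all handled correctly.
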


\section{SDE Weak Solvability}\label{Weak Solution definition}

\begin{definition}[Weak solution]
    A weak solution of a given stochastic differential equation is a triple $(X, W), 
    (\Omega,\mathscr{F},P), \{\mathscr{F}_t\}$, where 
    \begin{itemize}
        \item $(\Omega,\mathscr{F},P)$ is a probability space, and $\{\mathscr{F}_t\}$ is a filtration of sub-$\sigma$-fields of $\mathscr{F}$ satisfying the usual conditions, 
        \item $X=\{X_t, \mathscr{F}_t; 0\leq t<\infty\}$ is a continuous, adapted $\mathbb{R}^d$-valued process, 
        $W=\{W_t, \mathscr{F}_t; 0\leq t<\infty\}$ is an $r$-dimensional Brownian motion, 
        \item $P[\int_0^t \{|b_i(s,X_s)|+\sigma_{ij}^2 (s,X_s)\}ds<\infty]=1$ holds for every $1\leq i\leq d, 1\leq j\leq r$ and $0\leq t<\infty$, 
        \item the integral 
        \begin{equation}
            X_t = X_0 + \int_0^t b(s,X_s)ds + \int_0^t \sigma(s,X_s)dW_s,
        \end{equation}
        holds almost surely, for $0\leq t<\infty$. 
    \end{itemize}
\end{definition}

A weak solution of SDE is related to the distribution along the dynamics. We 
would remark here that this triple $(X,W), (\Omega,\mathscr{F},P), \{\mathscr{F}_t\}$ 
have to be considered as the solution itself, not assumptions. 


\begin{definition}[Weak Probability Uniqueness]
    We say that  uniqueness in the sense of probability law holds for a given stochastic 
    equation if, for any two weak solutions $(X,W), (\Omega,\mathscr{F},P), \{\mathscr{F}_t\}$, 
    and $(\tilde{X},\tilde{W}), (\tilde{\Omega},\tilde{\mathscr{F}},\tilde{P}), \{\tilde{\mathscr{F}_t}\}$, with the same initial distribution 
    $\Gamma$, i.e., 
    \begin{equation}
        P[X_0\in \Gamma] = \tilde{P}[\tilde{X_0}\in \Gamma], \forall \Gamma \in \mathscr{R}(\mathbb{R}^d), 
    \end{equation}
    the two processes $X, \tilde{X}$ have the same law. 
\end{definition}

\begin{proposition}[Weak existence]
    Consider the stochastic differential equation 
    \begin{equation}
        dX_t = b(t,X_t)dt + dW_t, 0\leq t\leq T,  
    \end{equation}
    where $T$ is a fixed positive number, $W$ is a $d$-dimensional Brownian motion, and 
    $b(t,x)$ is a Borel-measurable, $\mathbb{R}^d$-valued function on $[0,T]\times \mathbb{R}^d$ 
    which satisfies 
    \begin{equation}
        \|b(t,x)\|\leq K(1+\|x\|), 0\leq t\leq T, x\in \mathbb{R}^d
    \end{equation}
    for some positive constant $K$. For any probability measure $\mu$ on $(\mathbb{R}^d, \mathscr{R}(\mathbb{R}^d))$, 
    the stochastic differential equation has a weak solution with initial distribution $\mu$. 
\end{proposition}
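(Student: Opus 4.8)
The plan is to obtain the weak solution by Girsanov's change of measure: start from a Brownian motion carrying the prescribed initial law and tilt the measure so that the drift $b$ is produced. First I would set up a reference probability space $(\Omega,\mathscr{F},P)$ carrying a $d$-dimensional Brownian motion $B$ and an independent $\mathbb{R}^d$-valued vector $X_0$ with law $\mu$; define $X_t := X_0 + B_t$ for $0\le t\le T$ and let $\{\mathscr{F}_t\}$ be the $P$-augmentation of the filtration generated by $(X_0, B)$. Under $P$ the process $X$ is continuous, adapted, and trivially solves $dX_t = dB_t$ with $X_0\sim\mu$; this same $X$ will be the state process of the weak solution, only under a new measure.

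Next I would introduce the exponential local martingale
\[
  Z_t := \exp\!\Bigl(\int_0^t b(s,X_s)^{\top}\,dB_s \;-\; \tfrac{1}{2}\int_0^t \|b(s,X_s)\|^2\,ds\Bigr),\qquad 0\le t\le T.
\]
Because $b$ has linear growth and $X$ has continuous paths, $\int_0^T\|b(s,X_s)\|^2\,ds<\infty$ $P$-a.s., so $Z$ is a well-defined nonnegative local martingale and hence a supermartingale with $E[Z_t]\le 1$. Assuming for the moment that $Z$ is a \emph{genuine} martingale with $E[Z_T]=1$, the prescription $d\tilde{P}:=Z_T\,dP$ defines a probability measure, and Girsanov's theorem shows that $\tilde{W}_t := B_t-\int_0^t b(s,X_s)\,ds = X_t - X_0 - \int_0^t b(s,X_s)\,ds$ is a $d$-dimensional Brownian motion on $(\Omega,\mathscr{F},\tilde{P})$ with respect to $\{\mathscr{F}_t\}$. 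Rearranging gives $X_t = X_0 + \int_0^t b(s,X_s)\,ds + \tilde{W}_t$ almost surely, and since $Z_0=1$ the measures $P$ and $\tilde{P}$ coincide on $\mathscr{F}_0$, so $X_0$ still has law $\mu$ under $\tilde{P}$. Hence $(X,\tilde{W}),(\Omega,\mathscr{F},\tilde{P}),\{\mathscr{F}_t\}$ is a weak solution with initial distribution $\mu$, and the integrability condition in the definition of a weak solution is immediate from the linear growth of $b$ and the finiteness of $\int_0^T\|b(s,X_s)\|^2\,ds$.

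The heart of the argument — and the step I expect to be the main obstacle — is verifying that $Z$ is a true martingale, since Novikov's condition $E\exp\bigl(\tfrac12\int_0^T\|b(s,X_s)\|^2\,ds\bigr)<\infty$ can fail under mere linear growth (the integrand grows like $\|X_s\|^2$, whose exponential moments are only finite below a threshold). I would resolve this by localization. Set $S_n:=\inf\{t\ge 0:\|X_t\|\ge n\}$, so that the stopped process $Z_{\cdot\wedge S_n}$ has a bounded integrand and is therefore a true martingale with $E[Z_{T\wedge S_n}]=1$; define probability measures $Q_n$ by $dQ_n:=Z_{T\wedge S_n}\,dP$. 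By Girsanov applied to the stopped exponential, under $Q_n$ the process $X_{\cdot\wedge S_n}$ satisfies the SDE driven by a $Q_n$-Brownian motion up to time $S_n$, and $X_0$ still has law $\mu$. Working conditionally on $\mathscr{F}_0$ (so that $\|X_0\|$ acts as a constant), the linear growth bound combined with the Burkholder--Davis--Gundy and Gronwall inequalities yields $E_{Q_n}\bigl[\sup_{0\le s\le T}\|X_{s\wedge S_n}\|^2 \,\big|\,\mathscr{F}_0\bigr]\le C_T(1+\|X_0\|^2)$ with $C_T$ independent of $n$. Chebyshev's inequality then gives $Q_n(S_n\le T\mid\mathscr{F}_0)\le C_T(1+\|X_0\|^2)/n^2$, which tends to $0$ a.s. and is bounded by $1$, so dominated convergence yields $Q_n(S_n\le T)\to 0$ (this also disposes of a general, possibly heavy-tailed, $\mu$; alternatively one first treats deterministic initial conditions and then mixes over $\mu$). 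Consequently $E[Z_T]\ge E[Z_T\mathbf{1}_{\{S_n>T\}}] = E[Z_{T\wedge S_n}\mathbf{1}_{\{S_n>T\}}] = Q_n(S_n>T)\to 1$, and together with $E[Z_T]\le 1$ this forces $E[Z_T]=1$. Thus $Z$ is a martingale and the construction above goes through. This is precisely Bene\v{s}'s weak-existence argument; see \cite[Chapter~5]{1998GTM113_SDE}.
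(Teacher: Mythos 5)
Your proposal is correct and follows essentially the same route as the paper: construct the state process from a Brownian motion carrying the initial law $\mu$, form the exponential density $Z_t=\exp\bigl(\int_0^t b(s,X_s)^{\top}dB_s-\tfrac12\int_0^t\|b(s,X_s)\|^2ds\bigr)$, and apply Girsanov's theorem to produce the drift under the tilted measure. The only difference is that the paper simply \emph{asserts} that $Z$ is a martingale, whereas you supply the Bene\v{s}-type localization argument ($S_n$, the a priori moment bound, and $Q_n(S_n\le T)\to 0$ forcing $E[Z_T]=1$) that actually justifies this key step under the linear-growth hypothesis.
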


\begin{theorem}[Girsanov Theorem]\label{Girsanov}
    Given a probability space $(\Omega, \mathscr{F}, P)$ and a $d$-dimensional Brownian 
    motion $W=\{W_t=(W_t^{(1)},\cdots,W_t^{(d)}), \mathscr{F}_t, 0\leq t<\infty\}$ defined 
    on it, with $P[W_0=0]=1$. Let $\mathscr{F}_t$, $X=\{X_t=(X_t^{(1)},\cdots,X_t^{(d)}), 
    \mathscr{F}_t, 0\leq t<\infty\}$ be a vector of measurable, adapted square-integrable 
    processes. We set 
    \begin{equation}
        Z_t(X)\triangleq \exp [\sum_{i=1}^{d} \int_{0}^{t} X_s^{(i)}dW_s^{(i)} -\frac{1}{2}\int_{0}^{t} \|X_s\|^2 ds].
    \end{equation}
    Assume that $Z(X)$ is a martingale. Define a process $\tilde{W}=\{\tilde{W}_t=(\tilde{W}_t^{(1)},\cdots,\tilde{W}_t^{(d)}), 
    \mathscr{F}_t, 0\leq t<\infty\}$ by 
    \begin{equation}
        \tilde{W}_t^{(i)}\triangleq W_t^{(i)} -\int_{0}^{t} X_s^(i)ds, 1\leq i\leq d, 0\leq t<\infty. 
    \end{equation}
    For each fixed $T\in[0,\infty)$, the process $\{\tilde{W}_t, \mathscr{F}_t, 0\leq t\leq T\}$ is a $d$-dimensional 
    Brownian motion on $(\Omega, \mathscr{F}_T, \tilde{P}_T)$. 
\end{theorem}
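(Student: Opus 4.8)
The plan is to exhibit $\tilde{W}$ as a $d$-dimensional Brownian motion under the new measure $\tilde{P}_T$ defined on $\mathscr{F}_T$ by $\tilde{P}_T(A)\triangleq E[\mathbf{1}_A Z_T]$, by checking the three hypotheses of Lévy's characterization theorem: a continuous, $\mathscr{F}_t$-adapted process that vanishes at $t=0$, is a martingale under $\tilde{P}_T$, and has quadratic covariation $\langle \tilde{W}^{(i)},\tilde{W}^{(j)}\rangle_t=\delta_{ij}t$, is necessarily a $d$-dimensional Brownian motion. So the proof splits into (i) verifying that $\tilde{P}_T$ is a genuine probability measure, (ii) computing the quadratic covariation of $\tilde{W}$, and (iii) showing each coordinate $\tilde{W}^{(i)}$ is a $\tilde{P}_T$-martingale.

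Step (i) is immediate from the standing assumption that $Z(X)$ is a $P$-martingale: since $Z_0=1$ we get $E[Z_T]=1$, and $Z_T>0$ almost surely because it is an exponential, so $\tilde{P}_T$ is a probability measure on $(\Omega,\mathscr{F}_T)$, equivalent to the restriction of $P$ to $\mathscr{F}_T$. Step (ii) is also routine: $\tilde{W}^{(i)}_t=W^{(i)}_t-\int_0^t X^{(i)}_s\,ds$ differs from $W^{(i)}$ only by a process of finite variation, which contributes nothing to quadratic (co)variation, and quadratic variation is an almost-sure pathwise notion unchanged by passage to an equivalent measure; hence $\langle \tilde{W}^{(i)},\tilde{W}^{(j)}\rangle_t=\langle W^{(i)},W^{(j)}\rangle_t=\delta_{ij}t$ holds under $\tilde{P}_T$ as well.

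The substance is Step (iii). First apply Itô's formula to the exponential to get the linear SDE $dZ_t=Z_t\sum_{j=1}^d X^{(j)}_t\,dW^{(j)}_t$, where the $-\tfrac12\|X_s\|^2\,ds$ term in the exponent exactly cancels the second-order Itô term. Then invoke the Bayes rule for conditional expectations under an equivalent change of measure: a process $M$ is a $\tilde{P}_T$-martingale if and only if $MZ$ is a $P$-martingale. Applying Itô's product rule to $\tilde{W}^{(i)}_t Z_t$, using $d\tilde{W}^{(i)}_t=dW^{(i)}_t-X^{(i)}_t\,dt$, the SDE for $Z$, and the cross-variation term $d\langle \tilde{W}^{(i)},Z\rangle_t=Z_tX^{(i)}_t\,dt$, the two $Z_tX^{(i)}_t\,dt$ contributions cancel and one is left with $d(\tilde{W}^{(i)}_t Z_t)=\tilde{W}^{(i)}_t Z_t\sum_j X^{(j)}_t\,dW^{(j)}_t+Z_t\,dW^{(i)}_t$, a pure stochastic integral against $W$, hence a $P$-local martingale. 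A localization along stopping times $\tau_n\uparrow\infty$, followed by a passage to the limit that uses the martingale (hence $[0,T]$-uniform-integrability) property of $Z$ together with the square-integrability of $X$ to justify dominated convergence, upgrades this to a true $P$-martingale; so $\tilde{W}^{(i)}$ is a $\tilde{P}_T$-martingale for every $i$.

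I expect the last move of Step (iii) — promoting $\tilde{W}^{(i)}Z$ from a local martingale to a genuine martingale — to be the main obstacle, since a local martingale need not be a martingale and this is precisely where the hypothesis ``$Z(X)$ is a martingale'' must be used quantitatively rather than formally, via a careful localization-plus-integrability estimate. Once (i)--(iii) hold, $\tilde{W}$ is a continuous $\tilde{P}_T$-martingale with $\tilde{W}_0=0$ and $\langle \tilde{W}^{(i)},\tilde{W}^{(j)}\rangle_t=\delta_{ij}t$, so Lévy's characterization gives that $\{\tilde{W}_t,\mathscr{F}_t,0\le t\le T\}$ is a $d$-dimensional Brownian motion on $(\Omega,\mathscr{F}_T,\tilde{P}_T)$, which is the assertion. (An alternative to Lévy's theorem for the final step is to apply the same ``$MZ$ is a $P$-martingale'' criterion to $M_t=\exp(\sqrt{-1}\,\theta\cdot\tilde{W}_t+\tfrac12\|\theta\|^2 t)$ and read off $E^{\tilde{P}_T}[\exp(\sqrt{-1}\,\theta\cdot(\tilde{W}_t-\tilde{W}_s))\mid\mathscr{F}_s]=\exp(-\tfrac12\|\theta\|^2(t-s))$, which characterizes independent Gaussian increments directly.)
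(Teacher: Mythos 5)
Your outline is correct and is the standard proof of Girsanov's theorem (it is essentially the argument in Karatzas--Shreve, Theorem 3.5.1): define $\tilde{P}_T$ via the density $Z_T$ and use $E[Z_T]=Z_0=1$ with $Z_T>0$ to see it is an equivalent probability measure; note that $\langle\tilde{W}^{(i)},\tilde{W}^{(j)}\rangle_t=\delta_{ij}t$ because the absolutely continuous drift contributes nothing to quadratic covariation and quadratic covariation is invariant under equivalent changes of measure; derive $dZ_t=Z_t\sum_j X_t^{(j)}dW_t^{(j)}$ and use the Bayes rule plus the product formula to see that $\tilde{W}^{(i)}Z$ is a $P$-local martingale; conclude by L\'evy's characterization. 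One simplification: L\'evy's characterization already holds for continuous \emph{local} martingales, so you only need $\tilde{W}^{(i)}$ to be a $\tilde{P}_T$-local martingale, which follows from the local-martingale form of the Bayes rule; the integrability upgrade you flag as the main obstacle can therefore be sidestepped entirely. That said, your route is genuinely different from what the paper prints, because the text appearing under the proof environment after Theorem \ref{Girsanov} is not a proof of Girsanov's theorem: it starts from a Brownian family, \emph{invokes Theorem \ref{Girsanov} itself} to assert that $W_t\triangleq X_t-X_0-\int_0^t b(s,X_s)\,ds$ is a Brownian motion under the tilted measures $Q^x$, and concludes that $(X,W)$ is a weak solution of the SDE. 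That is the classical \emph{application} of Girsanov to establish weak existence under a linear-growth drift condition (it logically belongs to the weak-existence proposition stated just above), and read as a proof of the theorem it is circular. Your argument supplies what the paper omits.
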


\begin{proof}
    We begin with a $d$-dimensional Brownian family $X=\{X_t, \mathscr{F}_t, 0\leq t\leq T\}, 
    (\omega,\mathscr{F}), \{P^x\}_{x\in \mathbb{R}^d}$. Let 
    \begin{equation}
        Z_t(X)\triangleq \exp [\sum_{i=1}^{d} \int_{0}^{t} X_s^{(i)}dW_s^{(i)} -\frac{1}{2}\int_{0}^{t} \|X_s\|^2 ds].
    \end{equation}
    be a martingale under each measure $P^x$, so the Theorem \ref{Girsanov} implies that, 
    under $Q^x$ given by $(dQ^x/dP^x)=Z_T$, the process 
    \begin{equation}
        W_t \triangleq X_t - X_0 - \int_{0}^{t} b(s,X_s)ds, 0\leq t\leq T 
    \end{equation}
    is a Brownian with $Q^{\mu}(A)\triangleq \int_{\mathbb{R}^d} Q^x(A)\mu(dx)$, the triple 
    $(X,W), (\Omega,\mathscr{F},P), \{\mathscr{F}_t\}$ is a weak solution of the SDE. 
\end{proof}


\begin{theorem}[Stroock \& Varadhan (1969)]\label{StroockVaradhan}
    Consider the stochastic differential equation 
    \begin{equation}
        dX_t = b(X_t)dt + \sigma(X_t)dW_t 
    \end{equation} 
    where the coefficients $b_i, \sigma_{ij}:\mathbb{R}^d\rightarrow\mathbb{R}$ are 
    bounded and continuous functions. Corresponding to every initial distribution $\mu$ 
    on $\mathscr{R}(\mathbb{R}^d)$ with 
    \begin{equation}
        \int\limits_{\mathbb{R}^d} \|x\|^{2m} \mu(dx)<\infty, 
    \end{equation}
    for some $m>1$, there exists a weak solution of the SDE. 
\end{theorem}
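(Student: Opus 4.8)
The plan is to obtain the weak solution via the Stroock--Varadhan \emph{martingale problem}, constructing it as a weak limit of solutions with smoothed coefficients. Write $a(x)=\sigma(x)\sigma(x)^\top$ and let $\mathcal{L}$ be the operator $(\mathcal{L}f)(x)=\tfrac12\sum_{i,j}a_{ij}(x)\partial_i\partial_j f(x)+\sum_i b_i(x)\partial_i f(x)$. A probability measure $P$ on $C([0,\infty);\mathbb{R}^d)$ solves the martingale problem for $(b,a)$ with initial law $\mu$ if $P\circ X_0^{-1}=\mu$ and, for every $f\in C_c^\infty(\mathbb{R}^d)$, the process $M^f_t:=f(X_t)-f(X_0)-\int_0^t(\mathcal{L}f)(X_s)\,ds$ is a $P$-martingale for the canonical filtration. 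The first step is the standard reduction: a solution of this martingale problem yields, after possibly enlarging the probability space to carry an auxiliary Brownian motion on the kernel of $\sigma(X_s)$ when $a$ is degenerate, a weak solution of the SDE. This is precisely the representation machinery recorded above (Theorem~\ref{Girsanov} together with the weak-existence proposition), so it suffices to produce a solution of the martingale problem.

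\emph{Regularize.} Let $b_n=b*\rho_n$, $\sigma_n=\sigma*\rho_n$ for a standard mollifier $\rho_n$. Then $b_n,\sigma_n$ are smooth and globally Lipschitz, are bounded by the same constant $K$ as $b,\sigma$, and converge to $b,\sigma$ uniformly on compacts by continuity. Take $\xi\sim\mu$ with $E\|\xi\|^{2m}<\infty$, independent of a Brownian motion $W$; the strong-existence theorem gives a strong solution $X^n$ of $dX^n_t=b_n(X^n_t)\,dt+\sigma_n(X^n_t)\,dW_t$ with $X^n_0=\xi$. Let $P_n$ be the law of $X^n$ on $C([0,\infty);\mathbb{R}^d)$. \emph{Tightness.} From the uniform bound $\|b_n\|,\|\sigma_n\|\le K$ and the Burkholder--Davis--Gundy inequality one gets, for $0\le s\le t\le T$, an estimate $E\|X^n_t-X^n_s\|^{2m}\le C_{m,T}\,|t-s|^m$ with $C_{m,T}$ independent of $n$; since $m>1$ this is the Kolmogorov--Chentsov criterion, and combined with $E\|X^n_0\|^{2m}=E\|\xi\|^{2m}<\infty$ it shows $\{P_n\}$ is tight. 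By Prokhorov's theorem pass to a subsequence with $P_{n_k}\Rightarrow P$.

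\emph{Identify the limit.} Fix $f\in C_c^\infty$, $0\le s<t$, and a bounded continuous functional $\Phi$ of the path up to time $s$. Itô's formula for the smooth problem gives $E[(M^{f,n_k}_t-M^{f,n_k}_s)\,\Phi]=0$, where $M^{f,n_k}$ is built from the operator $\mathcal{L}_{n_k}$ with coefficients $b_{n_k},a_{n_k}=\sigma_{n_k}\sigma_{n_k}^\top$. The functional $X\mapsto(M^f_t-M^f_s)\Phi$ is bounded and continuous on $C([0,\infty);\mathbb{R}^d)$ because the coefficients are continuous; using $\operatorname{supp}f$ compact, the uniform bounds, and the uniform-on-compacts convergence $\mathcal{L}_{n_k}f\to\mathcal{L}f$, one first replaces $\mathcal{L}_{n_k}$ by $\mathcal{L}$ up to an error that vanishes with $k$, then lets $P_{n_k}\Rightarrow P$ to conclude $E^P[(M^f_t-M^f_s)\Phi]=0$. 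As $f,s,t,\Phi$ were arbitrary, $M^f$ is a $P$-martingale, so $P$ solves the martingale problem for $(b,a)$ with initial law $\mu$; the reduction of the first step then delivers the weak solution.

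\emph{Main obstacle.} The delicate point is the last step: verifying that the test functional is genuinely continuous on path space and that the limit may be interchanged with the time integral $\int_s^t(\mathcal{L}_{n_k}f)(X_u)\,du$. This is exactly where continuity (not mere Borel-measurability) of $b$ and $\sigma$ is indispensable, and it explains why this theorem must assume continuous coefficients while the weak-solvability statements used earlier in the paper get by with boundedness. A secondary technical care point is the degenerate case in the reduction step, where the driving Brownian motion must be manufactured from the martingale-problem solution via an orthogonal-martingale representation after augmenting the probability space; this is routine but should be stated explicitly.
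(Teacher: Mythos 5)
Your proposal is correct in outline, but it takes a genuinely different route from the paper's proof in the approximation step. The paper follows the classical Euler--Maruyama construction: it freezes the coefficients along dyadic time partitions, $b^{(n)}(t,y)=b(y(\psi_n(t)))$, builds the approximants $X^{(n)}$ explicitly by the recursive scheme, derives the uniform modulus estimate $\sup_n E\|X_t^{(n)}-X_s^{(n)}\|^{2m}\leq C(1+E\|\xi\|^{2m})(t-s)^m$, and invokes Prohorov to extract a weak limit $P^{\ast}$. You instead mollify the coefficients in \emph{space}, lean on the strong-existence theorem for Lipschitz coefficients to produce the approximating laws, and then identify the limit through the martingale problem. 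Both routes share the same tightness machinery (your Burkholder--Davis--Gundy estimate is exactly the paper's moment inequality, and $m>1$ enters identically via Kolmogorov--Chentsov), so the real trade-off is in the approximation and identification: the Euler scheme is self-contained and needs no existence theory for the approximants, while your mollification route imports the strong-existence theorem but makes the passage to the limit cleaner, since $\mathcal{L}_{n_k}f\to\mathcal{L}f$ uniformly on the compact support of $f$. Notably, your proof is \emph{more complete} than the paper's as written: the paper stops at ``the sequence converges weakly to $P^{\ast}$'' and never verifies that $P^{\ast}$ induces a weak solution, whereas you carry out the martingale-problem identification and the representation step (including the caveat about augmenting the space with an auxiliary Brownian motion when $\sigma$ degenerates). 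Your closing observation --- that continuity of $b,\sigma$ is what makes the test functional continuous on path space and hence is indispensable precisely at the identification stage --- is the right diagnosis and applies equally to the step the paper omits.
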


\begin{proof}
    For integers $j\geq 0, n\geq 1$ we consider the dyadic rationals $t_j^{(n)}=j2^{-n}$
    and introduce the functions $\psi_n(t)=t_j^{(n)}$; $t\in [t_j^{(n)}, t_{j+1}^{n})$. 
    We define the new coefficients 
    \begin{align}
        b^{(n)}(t, y)\triangleq b(y(\psi_n(t))), \sigma^{(n)}(t,y)\triangleq \sigma(y(\psi(t)));  \\
        0 \leq t < \infty, y\in C[0,\infty)^d, 
    \end{align}
    which are progressively measurable functionals.
    Now let us consider on some probability space $(\Omega, \mathscr{F}, P)$ an $r$-dimensional 
    Brownian motion $W=\{W_t, \mathscr{F}_t^W; 0\leq t<\infty\}$ and an independent random 
    vector $\xi$ with the given initial distribution $\mu$, and let  us construct the 
    filtration $\{\mathscr{F}_t\}$. For each $n\geq 1$, we define the continuous process 
    $X^{(n)}= \{X_t^{(n)}, \mathscr{F}_t; 0\leq t<\infty\}$ by setting $X_0^{(n)}=\xi$ 
    and then recursively:
    \begin{equation}
        X_t^{(n)} = X_{t_j^{(n)}}^{(n)}+b(X_{t_j^{(n)}}^{(n)})(t-t_j^{(n)})+\sigma(X_{t_j^{(n)}}^{(n)})(W_t-W_{t_j^{(n)}}); 
    \end{equation}
    for $j\geq 0, t_j^{(n)}<t\leq t_{j+1}^{(n)}$. Then $X^{(n)}$ solves the functional 
    stochastic integral equation 
    \begin{equation}
        X_t^{(n)}=\xi+\int_{0}^{t} b^{(n)}(s,X^{(n)})ds+\int_{0}^{t} \sigma^{(n)}(s,X^{(n)})dW_s; 
    \end{equation}
    for $0\leq t<\infty$. Fix $0<T<\infty$. We use the inequality 
    \begin{equation}
        \sup\limits_{n\geq 1} E\|X_t^{(n)}-X_s^{(n)}\|^{2m} \leq C(1+E\|\xi\|^{2m})(t-s)^m; 
    \end{equation}
    where $0\leq s<t\leq T$, $C$ is a constant depending only on $m, T,$ the dimension $d$, 
    and the bound on $\|b\|^2+\|\sigma\|^2$. Let $P^{(n)}\triangleq P(X^{(n)})^{-1}; n\geq 1$ 
    be the sequence of probability measures induced on $(C[0,\infty)^d, \mathscr{R}(C[0,\infty)^d))$ 
    by these processes; and this sequence is tight. We may then assert by the Prohorov theorem, 
    relabeling indices if necessary, that the sequence $\{P^{(n)}\}_{n=1}^{\infty}$ converges 
    weakly to a probability measure $P^{\ast}$ on this canonical space. 
\end{proof}

\end{document}